\newcommand{\cmark}{\ding{51}}%
\newcommand{\xmark}{\ding{55}}%
\newcommand{\AG}{AGPI$^2$\xspace}%
\newcommand{\MI}{\text{MI}}%
\begin{document}

\title{Adaptive Generation of Privileged Intermediate Information for Visible-Infrared Person Re-Identification}

\author{IEEE Publication Technology,~\IEEEmembership{Staff,~IEEE,}
\thanks{This paper was produced by the IEEE Publication Technology Group. They are in Piscataway, NJ.}
\thanks{Manuscript received April 19, 2021; revised August 16, 2021.}}

\author{
Mahdi Alehdaghi, Arthur Josi,~\IEEEmembership{Student Member,~IEEE}, Rafael M. O. Cruz, Pourya Shamsolameli,~\IEEEmembership{Senior Member,~IEEE} and Eric Granger,~\IEEEmembership{Member,~IEEE}
\thanks{M.~Alehdaghi, A.~Josi, R. Cruz, and E. Granger are with the LIVIA, ILLS, Dept. of Systems Engineering, ETS Montreal, Canada (Emails: \{mahdi.alehdaghi.1, arthur.josi.1\}@ens.etsmtl.ca, \{Rafael.Menelau-Cruz, Eric.Granger\}@etsmtl.ca).}
\thanks{P.~Shamsolmoali is with the University of York, U.K. (Email: pshams55@gmail.com).}
}


\markboth{IEEE TRANSACTIONS ON INFORMATION FORENSICS AND SECURITY}%
{Shell \MakeLowercase{\textit{et al.}}: A Sample Article Using IEEEtran.cls for IEEE Journals}


\maketitle

\begin{abstract}
Visible-infrared person re-identification (V-I ReID) seeks to retrieve images of the same individual captured over a distributed network of RGB and IR sensors. Several V-I ReID approaches directly integrate the V and I modalities to represent images within a shared space. However, given the significant gap in the data distributions between V and I modalities, cross-modal V-I ReID remains challenging. A  solution is to involve a privileged intermediate space to bridge between modalities, but in practice, such data is not available and requires selecting or creating effective mechanisms for informative intermediate domains.
This paper introduces the Adaptive Generation of Privileged Intermediate Information (\AG) training approach to adapt and generate a virtual domain that bridges discriminative information between the V and I modalities. \AG enhances the training of a deep V-I ReID backbone by generating and then leveraging bridging privileged information without modifying the model in the inference phase. This information captures shared discriminative attributes that are not easily ascertainable for the model within individual V or I modalities. Towards this goal, a non-linear generative module is trained with adversarial objectives, transforming V attributes into intermediate spaces that also contain I features. This domain exhibits less domain shift relative to the I domain compared to the V domain. Meanwhile, the embedding module within \AG aims to extract discriminative modality-invariant features for both modalities by leveraging modality-free descriptors from generated images, making them a bridge between the main modalities.
Experiments conducted on challenging V-I ReID datasets indicate that \AG consistently increases matching accuracy without additional computational resources during inference. 

\end{abstract}

\begin{IEEEkeywords}
Visible-Infrared Person Re-Identification, Learning Under Privileged Information, Adaptive Image Generation.
\end{IEEEkeywords}

\section{Introduction} \label{sec1}

Person re-identification (ReID) involves matching images or videos of individuals of interest captured across multiple non-overlapping cameras. Current methods for person ReID, such as those based on deep Siamese networks \cite{TIFS4,fu2021unsupervised, sharma2021person, somers2023body}, mainly use visible images to train a feature embedding backbone for similarity matching metric learning losses. 

Although deep learning (DL) models have made significant progress in recent years, person ReID remains a challenging task due to the non-rigid structure of the human body and variations in capture conditions in real-world scenarios. These variations can include illumination variations, blurring of motion, resolution, pose, viewpoint, occlusion, and background clutter \cite{TIFS4, bhuiyan2020pose, mekhazni2020unsupervised}. Moreover, visible ReID systems cannot perform well in low-light environments (e.g., at night), thereby limiting their practical application.

\begin{figure*}[t!]
\centering
\includegraphics[width=\linewidth]{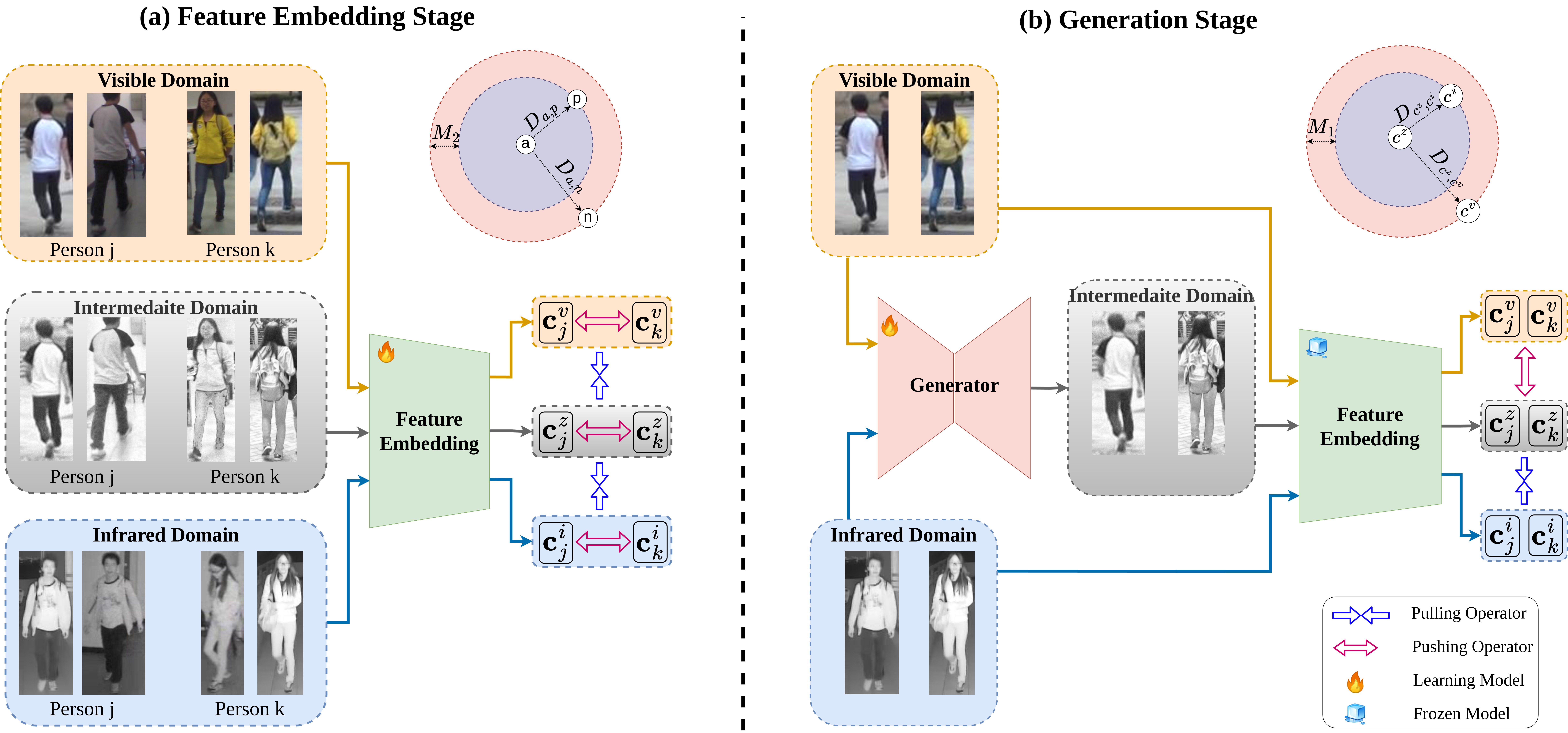}

\caption{A illustration of the training strategy with our \AG approach. The generated images are learned to be similar to the infrared modality by using adversarial objectives. (a) The feature embedding stage pushes the extracted features to approach the intermediate domain, while  (b) the generation stage transforms V images to an intermediate domain that approaches I images. The objective of feature embedding is to minimize the distance from the anchor to the positive ($D_{a,p}$) and maximize its distance from the negative samples ($D_{a,n}$). Let $c^v$, $c^z$, and $c^i$ be the center of the feature distributions for each class of V, intermediate, and I images. The generator tries to create a space in which the ID-aware features of each person are close to the infrared and visible at the same time. (To reach this goal and make the transferred images not be biased toward visible, we push them more toward infrared: $D_{c^z, c^i} < D_{c^z, c^v} + M_1 $).} 
\label{fig:intro}
\end{figure*}

Visible-infrared ReID (V-I ReID) is a variant of person ReID that involves matching individuals across RGB and IR cameras. V-I ReID is challenging since it requires matching individuals across different modalities with significant differences in appearance \cite{TIFS1,TIFS2,TIFS3,BMDG}. 
For robust cross-modal matching, a V-I ReID system must train a feature extractor that encodes pedestrians' images into feature information, which is interchangeable between V and I modalities. Then, during inference, images of people are matched across RGB and IR cameras based on these features. State-of-the-art methods for V-I ReID mainly focus on image-level \cite{Wang_2019_ICCV_AlignGAN,kniaz2018thermalgan} or feature-level alignment \cite{all-survey,Bi-Di_Center-Constrained,EDFL:journals/corr/abs-1907-09659,hetero-center,park2021learning}. Through the use of generative modules, the former aims to bridge the modality discrepancy by encoding images into feature space and then decoding and translating from different modalities to others. The latter obtains a modality-invariant embedding space by learning a shared model to represent images in feature space in which the discrepancy is minimized.
Since V and I modalities have distinct data distributions, which results in a large discrepancy between V and I images in appearance and statistics \cite{wu2017rgb}, the features extracted from such models tend to be biased on modality-specific information and not be proper for cross-modal matching.

Addressing this challenge involves considering an intermediate space that provides meaningful information about the primary modalities, guiding the model to focus on shared knowledge. However, the availability of such intermediate spaces is not guaranteed, and their creation or identification is not a straightforward task. Additionally, determining how to effectively utilize bridging information during both the training and testing stages poses a key aspect of this solution.

The Learning Under Privileged Information (LUPI) paradigm \cite{vapnik2009new} emerges as a solution aiming to leverage additional data available only during training. This approach has been adopted by various V-I ReID methodologies to bridge the gap between V and I images. For instance, works such as \cite{randLUPI,xModal,wei2021syncretic,HAT,zhang2021towards,Mixer} concentrate on exploring subspaces between the IR spectrum and RGB channels, creating intermediate domains to address the modality gap.
However, existing intermediate generative approaches fail to effectively adapt or generalize the intermediate information across V and I domains. The absence of specific objectives for generalizing intermediate images results in a limited ability to adapt to significant V-I domain shifts. To better capitalize on modality-invariant attributes, V-I ReID models necessitate appropriate intermediate images capable of bridging the substantial domain gap between I and V appearances. While utilizing intermediate images during testing enhances matching accuracy~\cite{kniaz2018thermalgan, Wang_2019_ICCV_AlignGAN}, it introduces complexity and may not be ideal for real-time applications.


This paper introduces an approach for the Adaptive Generation of Privileged Intermediate Information (\AG approach) that trains a backbone model for V-I person ReID by adapting a new intermediate space that connects V and I modalities. Our proposed \AG training approach comprises three key modules: a feature embedding backbone, a generator, and an ID-modality discriminator. Since each modality can be seen as a domain with its data distribution, we propose treating each modality as a separate domain. Then, our \AG aim is to generate an intermediate domain with similar attributes to the source V and I domains. This intermediate space enables the model to learn and extract common attributes between the two modalities. At inference time, the model can match the same person across modalities by representing them in a non-modality-specific feature space without relying on intermediate information. To achieve this, our framework adapts intermediate images to have the same content (body pose, background, and texture) as V images while sharing the style information (heat or temperature information) with I images. The \AG component provides an intermediate step for the feature embedding backbone, helping to reduce modality-specific information in the extracted features.

Given the absence of supervised information to generate the intermediate domain containing ID-aware information and bridging modality attributes, we propose an end-to-end approach with adversarial objectives w.r.t. the ultimate goal of V-I ReID, which is extracting unique and modality-free descriptors for each person. To this end, the generative module is encouraged to transform the V images to an intermediate domain to minimize the domain gap between such virtual and I images. Consequently, the generative module must focus on ID-aware attributes to avoid the intermediate images missing the discriminative information about the person in the V images. To reach this goal, the ID-modality discriminator is proposed, which seeks to reinforce the generator by discriminating modalities from ID-related features. Such property in the discriminator forces the generator to focus on ID-aware information. The feature embedding module simultaneously works to minimize the gap between the generated and V images, as shown in the left side of Fig. \ref{fig:intro}, while the generator's objective is to maximize this distance (right side of \ref{fig:intro}). This collective approach ensures the effective generation of intermediate domain images that capture both ID-aware information and modality attributes. 

In other words, \AG tackles the significant gap between modalities by generating and utilizing bridging information that links the two domains. Since such information is not available all the time or hard to benefit from, our \AG training approach is on two stages: (a) Finding a privileged linking domain between two modalities and (b) making the model robust against modality change by leveraging the intermediate domain as a clue for containing shared attributes among two modalities.  In comparison with \cite{MUN,idm} that are creating an intermediate domain in the latent feature space, our intermediate domain is in the input space. Therefore, our generative module can be used in other applications to generate intermediate images between visible and infrared and even in other V-I person ReID.
In Fig. \ref{fig:intro}, the generative module is depicted as trained to generate images with features closer to I and farther from V, enforcing a margin $M_1$ between the distances from the center of the feature vector representing images of each person in the intermediate space to V ($D_{c^z, c^v}$) and I ($D_{c^z, c^i}$). Simultaneously, the feature embedding stage ensures close similarity between the feature vectors of each person in the visible, intermediate, and infrared modalities.

\textbf{Our main contributions are summarized as follows:}
    (1) An adversarial training strategy called \AG is proposed to reduce the V and I domain gaps by adopting a generalized intermediate domain as privileged information. 
    (2) By analyzing the mutual information between intermediate space and V-I person ReID objectives, adversarial losses are proposed between the ID-modality discrimination, feature embedding, and generation modules to encourage the transformed images to incorporate discriminative ID information while reducing the influence of domain-specific attributes. Such losses make the modules play a min-max game between each other, resulting in a balanced and informative intermediate space.
    (3) An extensive set of experiments on the challenging SYSU-MM01 \cite{SYSU} and RegDB \cite{regDB} datasets indicate that our proposed \AG not only outperforms state-of-art methods for cross-modal V-I person ReID but can also be seamlessly integrated into other methods to enhance their accuracy without imposing additional computational overhead during inference. In other words, our framework is agnostic and can be applied to other image retrieval applications to improve their performance. 


\section{Related Work}

\noindent \textbf{(a) Visible-Infrared Person ReID:} 

This paper focuses on V-I ReID, where deep backbone models are trained to match individuals between V and I cameras using a labeled set of V and I images captured using RGB and IR cameras, respectively. Then, during inference, different individuals are matched across camera modalities, that is, matching query I images to gallery V images, or vice versa \cite{chen2021neural,cm-gan,fu2021cm,adv-modal,park2021learning,xu2021cross,HCML,hetero-center}. 
Existing DL models used for V-I ReID can be divided into Generative and Representative models. The former seeks to map raw data between two modalities to reduce modality discrepancies at the image level, while the latter focuses on finding a suitable discriminant representation by reducing the cross-modality gap at the feature level. 


Some generative models, like ThermalGAN \cite{kniaz2018thermalgan} or AlignGAN \cite{Wang_2019_ICCV_AlignGAN} employed autoencoders to translate V and I images and found a common feature space through the generation process. D\textsuperscript{2}RL \cite{D2RL} generated fake-real paired images from synthetic V and I images and encoded them into a common feature space for feature representation and matching. Other models \cite{HI-CMD,JSIA-paired-images1,paired-images2} relied on GANs to create synthetic images conditioned by a modality-invariant representation. However, these methods are complex at inference time, requiring image generation at test time. Moreover, they do not necessarily produce high-quality images that are capable of bridging the gap between modalities.


Representation approaches focus on training a backbone model to learn a discriminant representation that captures the essential features of each modality while exploiting their shared information \cite{HCML,cm-gan,Bi-Di_Center-Constrained,DZP,EDFL:journals/corr/abs-1907-09659,all-survey} or exploiting self-similarity \cite{temp2}. In \cite{all-survey,HAT,CAJ,jiang2020crossGranularity,park2021learning,JFLN}, authors only used global features combined through multimodal fusion.  
The absence of local information limits their accuracy in more challenging cases. To improve robustness, recent methods combined global information with local part-based features \cite{DDAG,cmSSFT,part1,wu2021Nuances,SAAI,PartMix23,BMDG} or fused features of different resolutions similar to LRAR \cite{temp1}, which used two resolution-adaptive mechanisms. For example, \cite{DDAG,cmSSFT} divided the spatial feature map into fixed horizontal sections and applied a weighted-part aggregation.
Using the weighted mechanism, these methods could dynamically pay attention to the relative importance of different parts to improve their discrimination power.
\cite{SAAI,wu2021Nuances, CATA} dynamically detected regions in the spatial feature map to address the misalignment of fixed horizontally divided body parts.


In contrast, the proposed \AG uses a generative model only during the training phase to generate the privileged information to help the training process by bridging the modality gap. This makes our proposed method computationally efficient, relying only on the feature embedding backbone during inference.

\noindent \textbf{(b) Intermediate Modality in V-I ReID:}
Given the significant gap between V and I distributions, cross-modal matching is a challenging problem. One possible approach is to use an intermediate modality to help the model find common semantics between these two modalities. For example, the model \cite{DZP} relied on gray images instead of colored images. However, using grayscale alone limited the amount of valuable information from V images by not containing the V spectrum information. Therefore, Fan et al. \cite{Fan2020CrossSpectrumDP} combined the information from the three channels (V, I, and grayscale) and \cite{HAT} used grayscale images as an intermediate modality during the training process, while in \cite{CAJ,liu2021sfanet, DDAG2} augmented images were created by randomly selecting one value from the RGB color as the intermediate modality. Alehdaghi et al. \cite{randLUPI} introduced a novel approach involving the use of random linear combinations of RGB colors. This technique served as an effective bridge, facilitating the extraction of color-independent features and ensuring that the derived attributes are not only devoid of color bias but also contain characteristics commonly found in infrared images. \cite{DDAG2} performed the channel-level interaction by randomly determining whether to keep the original RGB image or perform a random channel selection. However, these methods made limited use of intermediate domains by searching for them in a small search space.

To address this limitation, methods in \cite{xModal,wu2017rgb,zhang2021towards,TIFS1} used generative models to create the wider search space for intermediate modality. For example, \cite{xModal} used a single convolution layer to transform V images into a new modality without using the I or identity information. \cite{wu2017rgb} proposed a pixel-to-pixel feature fusion operation on V and I images to build the synthetic images, and \cite{zhang2021towards} introduced a shallow auto-encoder to generate intermediate images from both modalities. The images generated by these methods were biased toward the source modality.
Lu et al. \cite{TIFS1} designed a channel interactive generator to generate confused modality images to address this problem. 
While such methods enhance performance, they ignore the intricate V-I image relationship and fail to take into account domain gaps between id-discriminative information during the generation of intermediate domains. In contrast, our approach focuses on minimizing domain shifts between visible and infrared to identify the most informative intermediate space. It also employs an ID-modality discriminator to focus on the person's identity, allowing the generative model to robustly create persons’ images from one modality into an intermediate while preserving the discriminative attributes of that person. Thus, the quality of the intermediate image and the id-aware information is significantly improved.

\noindent \textbf{(c) Learning Under Privileged Information:}
Supervised learning is based on labeled data to optimize an appropriate decision function, minimizing the generalization error. The learner's function typically converges rapidly to the optimal solution for simple learning tasks with a large amount of training data. However, the convergence rate becomes slow for more challenging tasks and larger optimization spaces \cite{LUPI}. The LUPI approach was initially proposed to accelerate the convergence of SVMs by incorporating prior knowledge into the learning process. In \cite{hoffman2016learning} used a hallucination network to introduce prior knowledge into an additional stream in CNN to extract depth-related attributes. Beyond seeking faster convergence, LUPI has been successfully applied in various applications to improve performance \cite{choi2017learning,kampffmeyer2018urban,kumar2021improved,lezama2017not,saputra2020deeptio}. In \cite{pande2019adversarial}, a GAN-based model is proposed to generate prior knowledge at the testing time, and Crasto et al. \cite{crasto2019mars} distilled this knowledge from privileged information to the V branch. Although the use of PI improves model performance, such proper and supervised information is unavailable or difficult to leverage. 
In this study, our focus lies in implementing the LUPI paradigm for V-I ReID, aiming to identify suitable and informative Privileged Information (PI) during training. To achieve this objective, we propose a generative model designed to dynamically transform modalities into an intermediate space, serving as privileged information exclusively during the training phase through the use of adversarial losses.

\section{Proposed Method}
\label{sec:method}
\begin{figure*}[ht!]
\centering
    \begin{minipage}{0.67\textwidth} 
        \centering
        \subfloat[Overal training Architecture]{{
        \includegraphics[width=\linewidth]{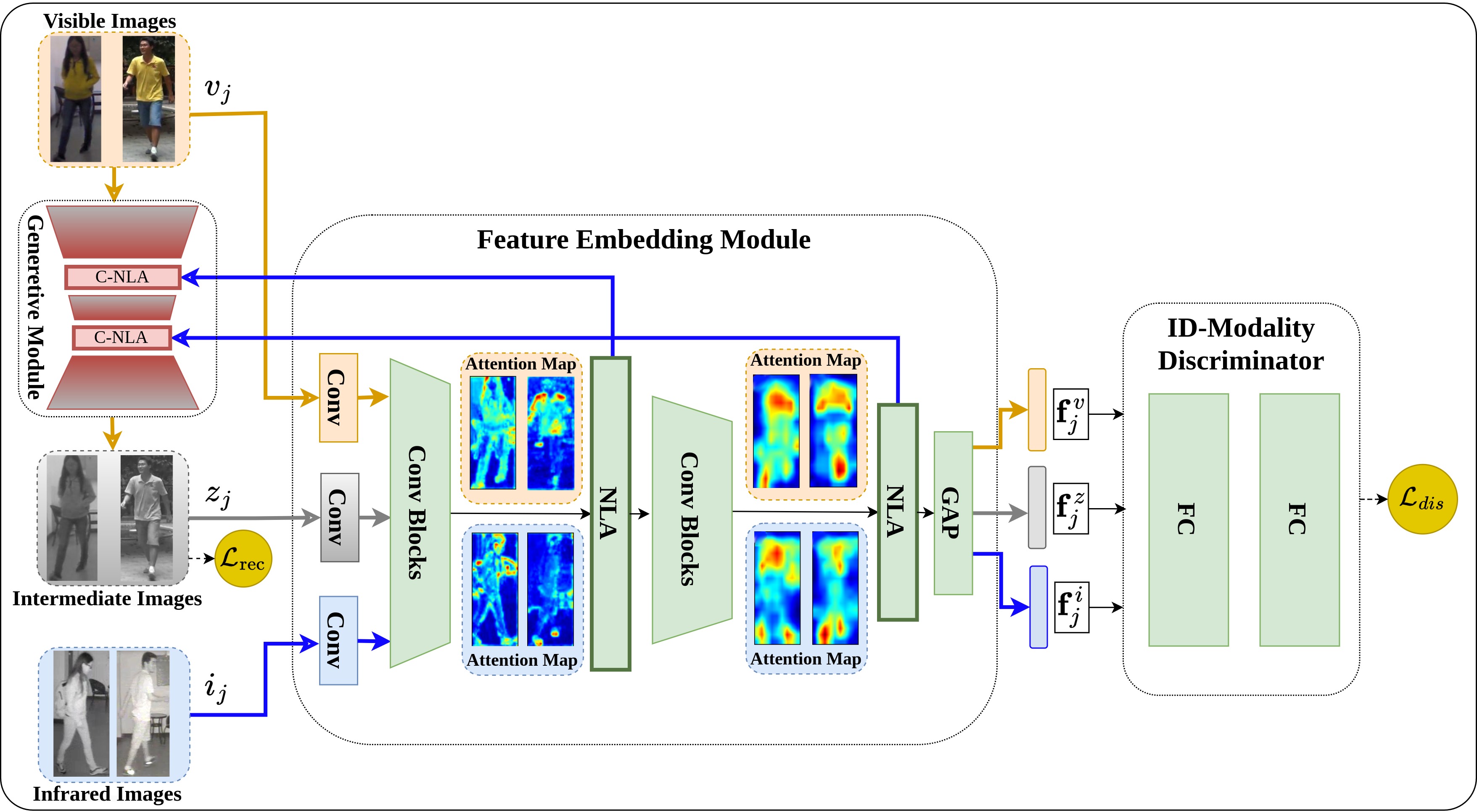}
        }}
    \end{minipage}
    \hfill
    \begin{minipage}{0.3\textwidth} 
        \centering
        \subfloat[Non-Local Attention (NLA) \cite{all-survey}]{
        \includegraphics[width=\linewidth]{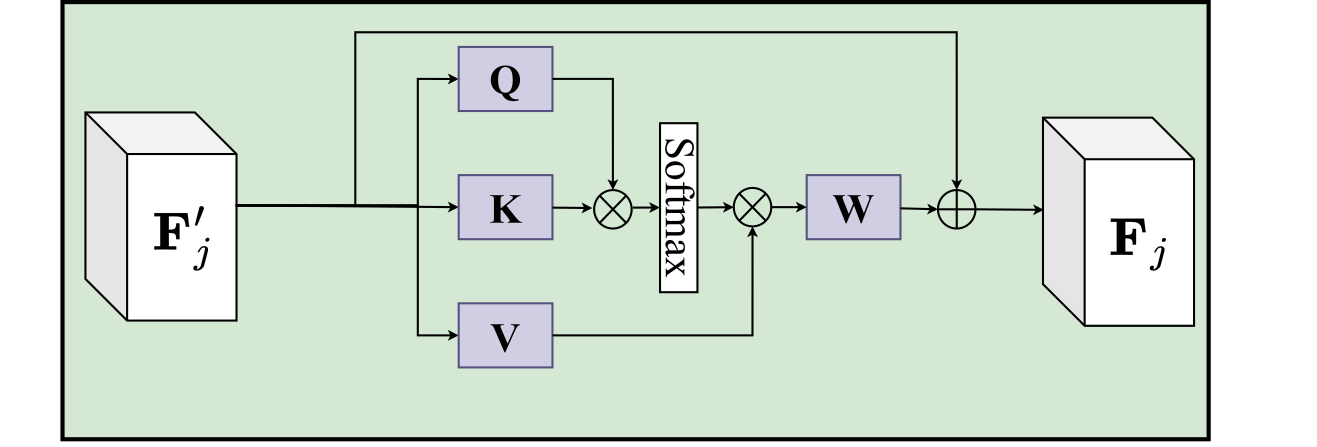}
        } \\
        \vspace{1.45cm} 
        \subfloat[Cross Local Attention (CLA)]{
        \includegraphics[width=\linewidth]{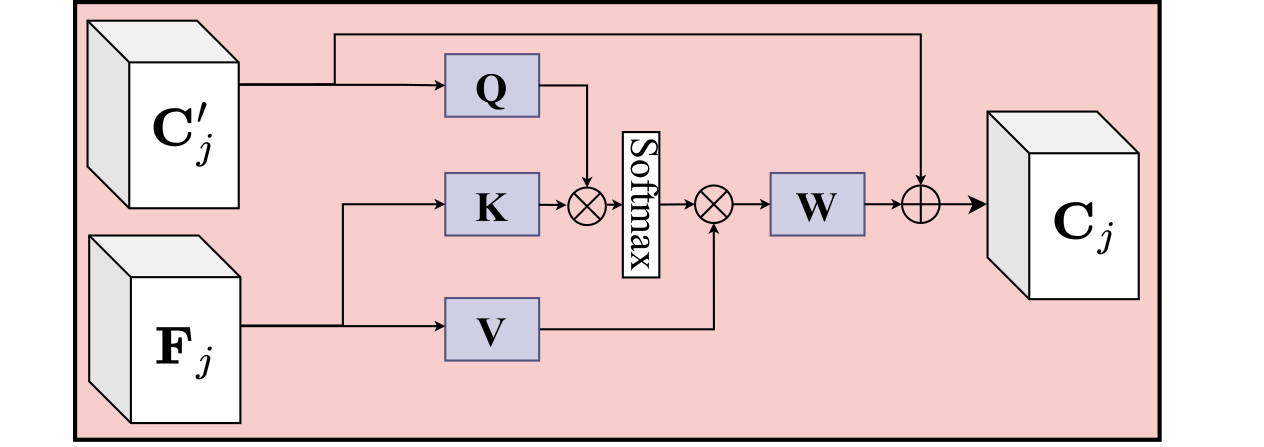}
        }
        
    \end{minipage}
\vspace{-0.3cm}
\caption{Block diagram of our proposed \AG training strategy for V-I person ReID. It takes advantage of the adaptive intermediate domain to reduce the distribution gap between the V and I domains. The feature embedding module seeks to minimize the distance between the two original (I and V) modalities by using the auxiliary generated intermediate domain.
The ID-modality discriminator learns to detect the modality from ID-aware features, and the generation process aims to transform V images to infrared through adversarial training with the discriminator.}
\label{fig:Model_triModal}
\end{figure*}

Our \AG approach generates a virtual PI domain that bridges V and I modalities during training and then leverages it to learn modality-invariant features.
Fig. \ref{fig:Model_triModal}(a) shows the overall \AG training architecture, comprising three modules: (1) The generator module that transforms V images into intermediate ones by incorporating infrared features. These intermediate images help the model simulate an infrared version of the visible images, allowing it to focus on non-color-specific attributes from the visible modality. Since such images have exact pose information of individuals and only lose the colors, the model accesses one intermediate step before seeing infrared images with larger shifts (different pose, background, and modality).
(2) ID-modality discriminator that identifies modality from ID-aware features to avoid the intermediate images being over-focused on the background, which is likely to be dominant for modality transformation,
and (3) A feature embedding module that extracts shared features from V and I images, guided by the proposed bridging losses to effectively minimize modality discrepancies.

During training, the ID-modality discriminator is taught to classify individuals in the intermediate and V images, considering them as belonging to the same class but distinct from the I images. This is achieved through the use of identity discriminative features extracted by the backbone. The primary goal is to guide the generative module in prioritizing the capture of personal attributes aligned with the characteristics associated with modality-specific information.
Concurrently, in the generation learning stage, the generator aims to produce intermediate images that the discriminator would adversarially classify as belonging to the I modality. This iterative process leads to the generated images becoming more similar to the I images, thereby enhancing the model's ability to extract common features from both V and I modalities.


\noindent \textbf{Preliminaries:} 
Let us assume a multimodal dataset for V-I ReID problems that is represented by $\mathcal{S} = \{ \mathcal{V}, \mathcal{I}\}$, where $\mathcal{V}=\{v_j\}_1^n$ are V and $\mathcal{I}=\{i_j\}_{n+1}^{n+m}$ are I images of $N$ different persons, and $\mathcal{Y} = \{y_j\}_1^{n+m}$ contains their identity for the training process. Metric losses $\mathcal{L}$ are typically defined based on distances $D$ between feature vectors $\mathbf{f}^v_j = \varphi(v_j;\Theta)$ of V images and feature vectors $\mathbf{f}^i_j = \varphi(i_j;\Theta)$ of I images for every two distinct individuals, as:
\begin{equation}
\label{eq:problem}
    \mathcal{L} = \sum_{j=1}^{n+m}{ \mathcal{C}(D(\mathbf{f}^v_j, \mathbf{f}^i_j) , D(\mathbf{f}^v_j, \mathbf{f}^v_k), y_j, y_k)},
    \vspace{-0.1cm}
\end{equation}
where $\mathcal{C}$ is metric loss objectives which can be triplet-loss \cite{chopra2005learning} or contrastive loss \cite{schroff2015facenet}, $y_j$ and $y_k$ represent the identity of person in images $j$ and $k$, and $\mathbf{f}^v_j, \mathbf{f}^i_k \in \mathbb{R}^d$, where $d$ the features dimension.


\subsection{LUPI Framework}

Given that the generated images are unavailable at test time, we formulate our proposed learning strategy according to the LUPI paradigm. With LUPI \cite{lambert2018deep}, the DL model leverages privileged information (PI) exclusively during the training phase, while the main information remains accessible in both the training and testing phases. For example, in cross-modal matching, the model typically relies on the V and I modalities during the training phase to leverage shared discriminative features between inputs. However, during the inference phase, the model only inputs the query image from one modality (V or I). Also, during the training phase, the model can use privileged bridging information between the two modalities to address the significant domain discrepancies between V and I images \cite{LUPI}. 
By incorporating the intermediate modality ($Z$), \AG reduces the modality gap between V and I, enabling the model to generalize more effectively and improve its ability to transfer knowledge across both modalities.
\subsection{Mutual Information Analysis}

We formulate the bridging characteristics objectives of the privileged intermediate information from the mutual information perspective, and demonstrate that jointly learning id-aware and modality-free objectives achieves a conditional mutual information maximization between intermediate images and identity space while discarding the modality style information:
\begin{equation}
\label{eq:mi_main}
    \max_{Z}  \quad \MI(Z;Y|M),
\end{equation}
where $Y$ is the identity of person input images, $M$ is the modality label, and $\MI(.;.)$ is the mutual information between two random variable. For solving Eq. (\ref{eq:mi_main}), we can expand it as follows: 
\begin{equation}
    \MI(Z;Y|M) =  \overbrace{\MI(Z;Y)}^{\textit{ID-Aware}} - \overbrace{\MI(Z;Y;M)}^{\textit{ID-Modality-Aware}}. 
    \label{eq:mi2}
\end{equation}

To maximize this, the first term (ID-aware) should be maximized and the second term (ID-Modality-Aware) minimized. 
To minimize the ID-Modality-Aware part for intermediate images, an adversarial training approach is proposed between the ID-Modality Discriminator and the generative module, in which the former maximizes that part by looking for id-related information specific to the modality. At the same time, the latter tries to create $Z$ images that fool the discriminator, thereby minimizing this term.

\subsection{ID-Modality Discriminator}

For the generator to create intermediate images that contain id-related information, we propose a module to discriminate modality along with the identities of each individual in images.
To achieve this, we employ an MLP network as our ID-modality Discriminator to classify modality types (Infrared or Visible) from features extracted from the shared embedding backbone. The generator's objective is to deceive the discriminator by creating a modality that appears false for a specific identity.

To implement this idea, we doubled the label space of each identity to train the discriminator to differentiate between the same identity in the V and I modalities:
\begin{equation}
y'_{j} = \begin{cases}
2y_j &\text{V and Intermediate modalities}\\
2y_j + 1 &\text{I modality}
\end{cases} 
\label{eq:disc_label}
\end{equation}
where $y_j$ is the original label.

For training the ID-modality discriminator, the objective is defined as: 
\begin{equation}
    \mathcal{L}_\text{dis} = \sum_{l=1}^{2N} -y'_{j,l}\log(p'_{j,l}),
    \label{eq:all_d}
    \vspace{-0.1cm}
\end{equation}
where $p'_{j,c}$ is the predicted probability observation $j$ of class $c$ by the discriminator $\mathcal{D}$.

Fig. \ref{fig:disc} shows the main difference between the general modality discriminator and our ID-modality discriminator. Each training sample is a shape labeled with the target label. Other SOTA intermediate generative approaches use ordinary binary discriminators that are trained to identify only modality (V and intermediate as the same), disregarding the person's identity. In contrast, by proposing new label spaces that are related to the identities of the discriminator, it would be able to detect V/I modalities from the ID-aware attributes of samples during training. 
\begin{figure}[t]
\begin{center}
\includegraphics[width=0.95\linewidth]{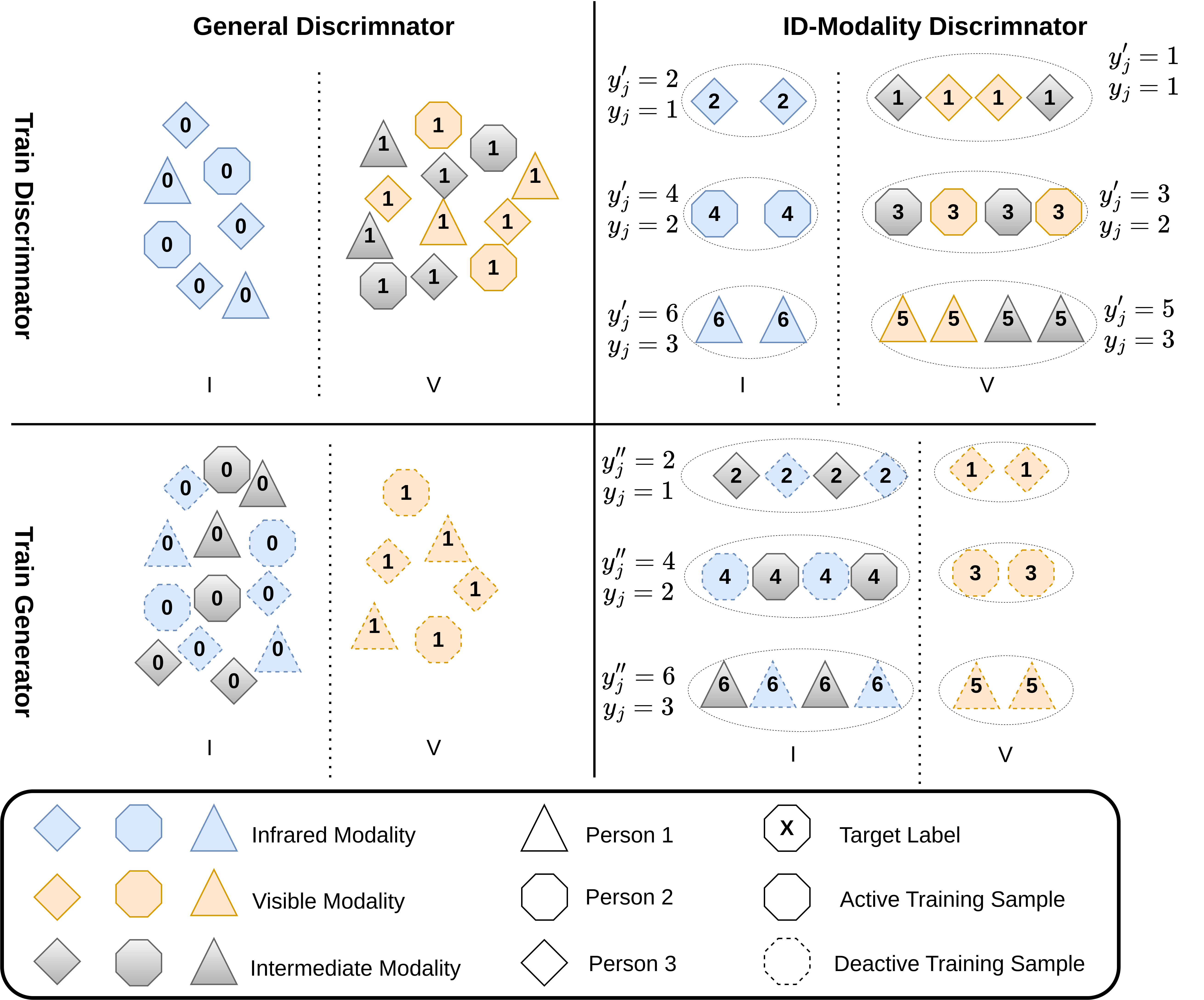}
\end{center}
   \caption{ID-Modality discriminator vs. general discriminator. Our label space is doubled to account for identity (differentiate between individuals in each modality). 
   }
\label{fig:disc}
\end{figure}

\subsection{Generative Module}
To generate the intermediate images, an adaptive and unsupervised intermediate generative model $\mathcal{G}$ is proposed to transfer visible images to intermediate modality in an end-to-end fashion regarding Eq. (\ref{eq:mi_main}). The reason behind using visible images is that the visible images contain more modality-specific id-discriminative information \cite{SIMcm} and need to be filtered to remove this biased information.
To have information from both modalities in the generation process, the V images, at first, are encoded as features vectors and fused with I features $F^i_{j}$ by the cross non-local attention \cite{non-local-net}, and then the decoder reconstructs the image, $z_j=\mathcal{G}(v_i, F^i_{j})$. 
To allow this module to focus on ID-aware information in intermediate images and maximize the $\MI(Z;Y)$, the cross-entropy between identity space and intermediate images must be minimized (we proved this in Appendix \ref{sec:prof1}). So the $\mathcal{L}_\text{id}$ which is a cross-entropy loss function between the features extracted from intermediate images ($\mathbf{f}^z_j$) and identity label $y_j$ is proposed to be minimized. Also, to filter them from any modality-specific information, these images should maximize the $\mathcal{L}_\text{dis}$ when they are fed to the ID-modality discriminator. To make this, we need a new label to fool the discriminator, and a label that specifies the same identity but in a different modality is chosen:
\vspace{-6pt}
\begin{equation}
\label{eq:gen_label}
    y''_{j} = 2y_j + 1.
\end{equation}
The reason is that since this discriminator is not binary, every label except the ground truth with which the discriminator is trained can be selected as an opposite label. So, the proper label should be selected to avoid interrupting the id-discriminative information in the created images.

Also, since the Z images are created by transferring from V to intermediate space, they tend to be biased toward visible space. To tackle this issue, we pushed their feature more toward infrared than visible features. This adversarial objective can be defined as:  
\begin{equation}
\begin{aligned}
\mathcal{L}_\text{adv} &= \sum_{l=1}^{2N} -y''_{j,l}\log(p''_{j,l}) + \max(0, M_1 + D_{\mathbf{c}^z_j,\mathbf{c}^i_j} - D_{\mathbf{c}^z_j,\mathbf{c}^v_j}),
\end{aligned}
\label{eq:adv}
\end{equation}
where $p''_{j}$ is the predicted label $j$ from the discriminator, and $\mathbf{c}^z_j$, $\mathbf{c}^i_j$,$\mathbf{c}^v_j$ are the centers of feature embedding of generated I, and V images, respectively, for the person with identity $y_j$. The function $D$ measures the distance between two vectors.


To generate more realistic images, we also transfer I images into the intermediate domain, which provides a supervised reconstruction loss:
\begin{equation}
\mathcal{L}_{\text{rec}} = \vert \mathcal{G}(i_j, F^i_{j}) - i_j  \vert,
\end{equation}
and the total loss for the generative modules is:
\begin{equation}
\mathcal{L}_{\text{gan}} = \mathcal{L}_\text{rec} + \mathcal{L}_\text{idz} + \lambda_{\text{adv}} \mathcal{L}_\text{adv}.
\label{eq:all_g}
\end{equation}
\subsection{Feature Embedding Module}
The feature embedding module, $\mathcal{F}$, has three backbones with shared layers to extract features from V, intermediate, and I images. Modified versions of the triplet and color-free losses are proposed to encourage modality-invariant descriptor features and train the V-I ReID model while considering intermediate images. 

\noindent \textbf{(1) Color-Free Loss:}
Previous studies \cite{LUPI,HAT} have shown the advantages of feature invariance when extracting features from both original V and generated modality images Z. For feature representations that are robust to modality variations, we used the same color-independent loss over virtual and V images as defined in \cite{LUPI} for training the V and intermediate backbone:
\begin{equation}
\mathcal{L}_{\text{cf}} = \lVert \mathbf{f}^v_j - \mathbf{f}^z_j \rVert,
\end{equation}
where $\mathbf{f}^z_j$ is the feature vector of the intermediate images $z^\star_j$.

\begin{algorithm}[t]
\caption{Joint \AG Training Strategy of $\mathcal{F}$, $\mathcal{G}$ and $\mathcal{D}$} \label{alg:joint_learning}
\begin{algorithmic}
\Require $\mathcal{S} = \{ \mathcal{V}, \mathcal{I}\}$ as training data
\While{all batches are not selected}
    \State $v_j, i_j \Leftarrow \textbf{batchSampler}(b,p)$  
    \State $i^\star_j \Leftarrow \mathcal{G}(i_j, F^i_{j})$ \Comment{reconstruct I} 
    \State $z^\star_j \Leftarrow \mathcal{G}(v_j,F^i_{j})$ \Comment{generate intermediate with detached gradient style feature $F^i_{j}$} 
    \State unfreeze $\mathcal{F}$ and $\mathcal{D}$
    \State update $\Theta_\mathcal{F}$ by optimizing Eq. (\ref{eq:all_f}) \Comment{detach gradient from $z_j$ to not modify $\mathcal{G}$ }
    \State update $\Theta_\mathcal{D}$ by optimizing Eq. (\ref{eq:all_d}) while freezing $\mathcal{F}$
    \State update $\Theta_\mathcal{G}$ by optimizing Eq. (\ref{eq:all_g}) while freezing $\mathcal{D}$
\EndWhile
\end{algorithmic}
\end{algorithm}
\noindent \textbf{(2) Intermediate Dual Triplet Loss:}
Since the role of intermediate space is bridging between modalities, we use extracted features from intermediate images as the middle between them and apply two separate triplet losses in dual mode:  
\begin{equation} \label{eq:gray-triplet}
    \mathcal{L}_{\text{dual}} = \mathcal{L}_{\text{tri}}(\mathbf{f}_{a\in \mathcal{V}},\mathbf{f}_{p \in \mathcal{I}},\mathbf{f}_{n \in \mathcal{Z}}) + \mathcal{L}_{\text{tri}}(\mathbf{f}_{a\in \mathcal{I}},\mathbf{f}_{p \in \mathcal{Z}},\mathbf{f}_{n \in \mathcal{V}}),
\end{equation}
\noindent where $\mathcal{L}_{\text{tri}}$) \cite{DDAG,all-survey} is common soft-margin triplet loss\cite{DDAG,all-survey} with margin $M_2$,
$a$, $p$, and $n$ indicate the anchor, positive, and negative samples, respectively. The superscripts $i$, $v$, and $z$ select V, I, and intermediate features, respectively.

The total loss proposed to train the feature embedding backbone is:  
\begin{equation}
\mathcal{L}_\text{f} = \mathcal{L}_{\text{id}} + \mathcal{L}_{\text{dual}} + \lambda_{\text{cf}} \mathcal{L}_{\text{cf}}.
\label{eq:all_f}
\end{equation}

\noindent where $\mathcal{L}_{\text{id}}$ is the cross-entropy loss to allow identity-discriminative features \cite{all-survey}.

\noindent \textbf{Joint Learning:}
Each batch of data contains $b$ person with $p$ positive images from the V and I domains. After extracting the embedding features, the model generates intermediate images as described in Algorithm \ref{alg:joint_learning}. Our joint training strategy optimizes the generator $\mathcal{G}$ by freezing the $\mathcal{F}$ and $\mathcal{D}$ parameters and also detaches the gradients from the generated images when the parameters $\mathcal{F}$ are updated.

\section{Results and Discussion}\label{sec2}

\begin{table*}[!t]
\centering
\resizebox{\linewidth}{!}{%
\begin{tabular}{|c|l|c||c|c|c||c|c|c||c|c|c||c|c|c|} 
\hline
\multicolumn{3}{|c||}{\multirow{2}{*}{\textbf{Family}}} & \multicolumn{6}{c||}{\textbf{SYSU-MM01}} & \multicolumn{6}{c|}{\textbf{RegDB}} \\
\cline{4-15}
\multicolumn{3}{|c||}{}& \multicolumn{3}{c||}{All Search}  & \multicolumn{3}{c||}{Indoor Search} & \multicolumn{3}{c||}{Visible $\rightarrow$ Infrared}  & \multicolumn{3}{|c|}{Infrared $\rightarrow$ Visible}\\ \hline
\multicolumn{2}{|c|}{\textbf{Method}} & Venue   & \textbf{R1} & \textbf{R10} & \textbf{mAP} & \textbf{R1} & \textbf{R10} & \textbf{mAP}& \textbf{R1} & \textbf{R10} & \textbf{mAP} & \textbf{R1} & \textbf{R10} & \textbf{mAP} \\ \hline \hline
\multirow{5}{*}{\rotatebox[origin=c]{90}{Generative}}   & AGAN  \cite{Wang_2019_ICCV_AlignGAN}&ICCV19  & 42.4   & 85.0 & 40.7 & 45.9 & 92.7 & 45.3 & 57.9   & - & 53.6 & 56.3 & - & 53.4 \\ 

& D\textsuperscript{2}RL  \cite{D2RL}&CVPR19  & 28.9 & 70.6 &  29.2 & - & - & - & 43.4 & 66.1 &  44.1 & - & - & -\\ 
 & JSIA \cite{JSIA-paired-images1}&AAAI20  & 45.01  &  85.7 & 29.5  & 43.8 & 86.2 & 52.9 & 48.5  &  - & 49.3  & 48.1 & - & 48.9\\ 
 & Hi-CMD \cite{HI-CMD}&CVPR20  & 34.94   & 77.58 & 35.94 & - & - & - & 86.39 & - & 66.04 & - & - & - \\
 & TS-GAN \cite{zhang2021rgb}&PRL21   & 58.3   & 87.8   & 55.1 & 62.1 & 90.8 & 71.3 & 68.2   & -   & 69.4 & - & - & -  \\ 
\hline

\multirow{8}{*}{\rotatebox[origin=c]{90}{Representation}} 
& AGW\cite{all-survey} &TPAMI20 & 47.50 & - & 47.65  & 54.17 & - & 62.97 & 70.05 & - & 50.19  & 70.49 & 87.21 & 65.90\\  [-0.4ex]
&DDAG \cite{DDAG} & ECCV20 & 54.74 & 90.39 & 53.02 & 61.02& 94.06 & 67.98& 69.34& 86.19& 63.46& 68.06&85.15& 61.80 \\
 
& SSFT \cite{cmSSFT} & CVPR20  &  63.4 & 91.2 & 62.0  & 70.50 & 94.90 & 72.60 &  71.0 & - & 71.7  & - & - & -  \\ 
& SIM \cite{SIMcm}&IJCAI20 &  56.93 & - & 60.88 & - & - & - &  74.47 & - & 75.29 & 75.24 & - & 78.30\\ 
& MPANet \cite{wu2021Nuances}&CVPR22  & 70.58 & 96.21  & 68.24 & 76.74 & 98.21 & 80.95 & 83.70  & - & 80.9 & 82.8 & - & 80.7\\ 
&DTRM \cite{DDAG2} & TIFS22 & 63.03 & 93.82 & 58.63 & 66.35& 95.58 & 71.76& 79.09& 92.25& 70.09& 78.02&91.75& 69.56 \\
& FTMI \cite{sun2023visible}&MVA23 & 60.5 & 90.5 & 57.3  & - & - & - & 79.00 & 91.10 & 73.60  & 78.8 & 91.3 & 73.7  \\ 
& RAPV-T \cite{zeng2023random}&ESA23   & 63.97 & 95.30  & 62.33 & 69.00 & 97.39 & 75.41 & 86.81 & 95.81  & 81.02 & 86.60 & 96.14 & 80.52\\
 \hline
\multirow{7}{*}{\rotatebox[origin=c]{90}{Intermediate}}   &  HAT\cite{HAT}&TIFS20  & 55.29 & 92.14  & 53.89 & - & - & -& 71.83 & 87.16  & 67.56 & 70.02 & 86.45 & 66.30\\ [-0.4ex]
&  CAJ \cite{CAJ}&ICCV21   & 69.88 & -  & 66.89  & 76.26 & 97.88 & 80.37& 85.03 & 95.49  & 79.14  & 84.75 & 95.33 & 77.82\\ 
& RPIG \cite{randLUPI}&ECCVw22  & 71.08 & 96.42  & 67.56 & 82.35 & 98.30 & 82.73 & 87.95 & 98.3 & 82.73 & 86.80 & 96.02 & 81.26\\
& MMN \cite{zhang2021towards}&ICM21  & 70.60 & 96.20  & 66.90 & 76.20 & \textbf{99.30} & 79.60& \textbf{91.60} & 97.70  & \textbf{84.10} & 87.50 & 96.00 & 80.50  \\
& SMCL\cite{wei2021syncretic}& ICCV21& 67.39 & 92.84 & 61.78  & 68.84 & 96.55 & 75.56& 83.93 & -  & 79.83 & 83.05 & - & 78.57\\
& G2DA \cite{WAN2023109150} & PR23  &  63.94 & 93.34 & 60.73 & 71.06 & 97.31 & 76.01 & - & - & - & - & - & -   \\ 
&MCBD \cite{TIFS1} & TIFS23 & 71.63 & 94.98 & 67.28 & 79.44& 98.32 & 79.85& 90.10& 97.09& 82.73& 87.66 & 96.68& 81.52 \\
\hline
\multicolumn{2}{|c|}{\AG (ours)} & - & \textbf{72.23} & \textbf{97.04} & \textbf{70.58} & \textbf{83.45} & 98.62 & \textbf{84.25} & 89.03 & \textbf{98.19} & 83.89 & \textbf{87.91} & \textbf{97.15} & \textbf{83.04}\\
\hline
\end{tabular}}
\caption{Accuracy of the proposed \AG and state-of-the-art methods on the SYSU-MM01 (single-shot setting) and RegDB datasets. All numbers are percent.}
\label{tab:all-results}
\end{table*}
\begin{table}[!tb]
\centering
\resizebox{\linewidth}{!}{%

\begin{tabular}{|c|l||c|c||c|c|} 
\hline
\multicolumn{2}{|c||}{\multirow{2}{*}{\textbf{Family}}} & \multicolumn{4}{c|}{\textbf{LLCM}}  \\
\cline{3-6}
\multicolumn{2}{|c||}{}& \multicolumn{2}{c||}{V $\rightarrow$ I}  & \multicolumn{2}{c|}{I $\rightarrow$ V} \\ \hline
\multicolumn{1}{|c|}{\textbf{Method}} & Venue   & \textbf{R1} & \textbf{mAP} & \textbf{R1}  & \textbf{mAP}\\ \hline \hline 
DDAG\cite{DDAG} & ECCV20 & 40.3 & 48.4 & 48.0 & 52.3 \\
CAJ\cite{CAJ} & ICCV21 & 56.5 & 59.8 & 48.8 & 56.6 \\
RPIG\cite{randLUPI} & ECCVw22 & 57.8 & 61.1 & 50.5 & 58.2 \\
\hline 
\AG (ours) & - & \textbf{61.78}  & \textbf{65.08} & \textbf{56.47} & \textbf{62.79} \\
\hline
\end{tabular}}
\caption{
Accuracy of the proposed \AG and state-of-the-art methods on the Large LLCM Dataset. All numbers are percent.}

\label{tab:LLCM-results}
\end{table}

\subsection{Experimental Methodology}
\noindent \textbf{Datasets:} Cross-modal research has made extensive use of the challenging SYSU-MM01 \cite{SYSU}, RegDB \cite{regDB} and recently published LLCM \cite{LLCM} datasets. SYSU-MM01 is a large-scale dataset that contains more than 22K and 11K V, and I images that are not co-located belong to 491 individuals, respectively. These images were captured from four RGB and two near-infrared cameras, with 395 identities used for training and 96 for testing. The dataset has two evaluation modes, single-shot and multi-shot, based on the number of images in the gallery. The former considers only one random image per identity in the gallery, while the latter considers ten images per identity. The RegDB dataset comprises 4,120 co-located V-I images from 412 identities captured from a single camera. Ten trial configurations randomly divide the dataset into two identical sets (206 identities) for training and testing. Tests are conducted in two different ways, comparing I to V (query) and vice versa. An LLCM dataset consists of a large, low-light, cross-modality dataset that is divided into training and testing sets at a 2:1 ratio.

\noindent \textbf{Implementation Details:} Pre-trained ResNet50 \cite{resnet} with two non-local attention modules is used as our feature extractor, the same as the AGW model\cite{all-survey}. Also, concerning the analysis of different models and inputs in \ref{sec:exp_gen}, we choose VQ-VAE \cite{van2017neural} as the adaptor for V images into the intermediate domain. All inputs are resized to 288 by 144, then cropped randomly and filled with zero padding or mean pixels. SGD and ADAM optimizers were used for the optimization process of re-identification and generative modules, respectively. 
To perform the triplet loss, a minimum of two distinct individuals must be included in each mini-batch. So, we set $p=4$ different images of $b=8$ distinct persons from the dataset for each batch. $M_1=0.1$, $M_2=0.3$, $\lambda_{\text{adv}}=0.1$, and $\lambda_{cf}=10$ were set based on the analysis conducted in \ref{sec:exp_hyper}.

\noindent \textbf{Performance Measures:} We use Cumulative Matching Characteristics (CMC), and Mean Average Precision (mAP) as evaluation metrics. The CMC is rank-k accuracy, which determines the probability of a correct cross-modality person image appearing in the top-k retrieved results. In contrast, when multiple matching images are found in a gallery, mAP measures the image retrieval performance.

\subsection{Comparison with State-of-the-Art Methods}

Table \ref{tab:all-results} provides a comparison of our proposed \AG with state-of-the-art V-I ReID approaches. Our experiments show \AG outperforms these methods in various situations. Although our approach requires additional image generation during training, this process is not required during inference, making it suitable for real-time applications. Additionally, the model learns to robustly represent features across modalities. In particular, our model improves the R1 accuracy and mAP score on the large-scale SYSU-MM01 dataset by 24.73\% and 22.93\%, over AGW's \cite{all-survey} (our baseline). Compared to the second-best approach for the "indoor search" scenario,  \AG outperforms by a margin of 1.15\% R1 and 2.34\% mAP without adding complexity to the feature backbone. For the RegDB dataset, \AG outperforms by a margin of R1 and mAP by 1.08\% and 1.16\%, for "visible to thermal" ReID scenarios. Similar improvements are noticeable in the "thermal to visible" scenario. In addition, our approach has the advantage that it can be easily used in different cross-modal ReID models to mine PI only in the training phase without incurring overhead during testing.

The \AG model achieves improvements on the large and complex
LLCM dataset as shown in Table \ref{tab:LLCM-results}. Since this dataset was introduced recently, few papers reported their results, so we ran other competitors' approaches with published code on the LLCM dataset. 

\vspace{-4pt}
\subsection{Incorporating to Other State-of-the-arts Models}

The effectiveness of \AG is evident when integrated into state-of-the-art V-I ReID models \cite{DDAG, wu2021Nuances, shape-Erase23}. We executed the authors' provided code for each method, using their specified hyperparameters, both with and without AGPI$^2$. Table \ref{tab:baselines} highlights that \AG significantly improves performance when incorporated into these baseline models. For instance, integrating \AG into DDAG \cite{DDAG} and SEFL \cite{shape-Erase23} boosts mAP accuracy by 4\% and 1\%, respectively. Notably, SEFL  uses supervised body-shape information from modalities and also utilizes gray images as a bridging modality alongside visible and infrared images in training, while DDAG is trained solely on visible and infrared images. Therefore, our \AG framework showcases its bridging ability more effectively in DDAG compared to SEFL. 

\begin{table}[h!]
\centering
\caption{Accuracy of SOTA methods with \AG on the SYSU-MM01, testing under single-shot setting. Results were obtained by executing the author's code on PyTorch v1.12 on Ubuntu 22.04 with 40 GB NVIDIA A100 GPU.}
\label{tab:baselines}
\vspace{-0.05cm}

\begin{tabular}{|l||c|c|}
\hline
\textbf{Method}         & \textbf{R1 (\%)} & \textbf{mAP (\%)} \\ \hline \hline
DDAG  \cite{DDAG}               & 52.74         & 51.92            \\
DDAG with \AG                  & \textbf{57.38}          & \textbf{55.27}            \\ \hline
MPANet* \cite{wu2021Nuances}     & 70.58         & 68.24            \\
MPANet* with \AG                  & \textbf{72.85}          & \textbf{70.66}            \\ \hline
SEFL \cite{shape-Erase23}      & 74.05          & 69.21            \\
SEFL with \AG                  & \textbf{75.17}          & \textbf{71.43}            \\ \hline
\end{tabular}

\end{table}

\begin{table}[t]
\small
\centering

\begin{tabular}{|l||c|c||c|c|}
\hline 
\multirow{2}{*}{\textbf{Modality}}  & \multicolumn{2}{c||}{\textbf{SYSU-MM01}} & \multicolumn{2}{c|}{\textbf{RegDB}} \\ \cline{2-5}
                                    & I         & V         & I         & V       \\ \hline  \hline
\textbf{I}nfrared                   & 0         & 0.86      & 0         & 1.05  \\
\textbf{V}isual                     & 0.86      & 0         & 1.05      & 0      \\ \hline
Grayscale (HAT \cite{HAT})          & 0.68      & 0.31      & 0.93      & 0.36  \\ 
Random Comb. (RPIG \cite{randLUPI}) & 0.64      & 0.34      & 0.92      & 0.39  \\
MMN \cite{zhang2021towards} & 0.61      & 0.33      & 0.87      & 0.43  \\

\AG (ours)                     & \textbf{0.58}      & \textbf{0.43}      & \textbf{0.79}      & \textbf{0.52} \\ \hline

\end{tabular}
\caption{MMD between V, I, and different intermediate generation images on the SYSU-MM01 and RegDB datasets. For the I column, lower is better, and for the V, higher is better. 
}
\label{tab:MMD}

\end{table}
\vspace{-4pt}
\subsection{Domain Shift Between Modalities} \label{sec:domain_shift}
As we discussed earlier, the privileged intermediate domain bridges the main domains by sharing discriminative attributes that are common to both. Crucially, this intermediate domain must remain unbiased towards any particular domain and lie at a balanced point. To validate that our created virtual domain meets this criterion, we measured the MMD distance between the intermediate domain and both V and I domains that reflect a balanced intermediary position, ensuring an unbiased connection.

The kernel-based Maximum Mean Discrepancy (MMD) \cite{gretton2012kernelMMD} is used to measure the distance between deep features extracted by a ResNet50 model trained on ImageNet\cite{imagenet}. 
Table \ref{tab:MMD} shows this distance for V, I, and intermediate images for the SYSU-MM01 dataset. As described in Section \ref{sec:method}, the intermediate domain must bridge the main domains to each other. Hence, it needs to have a smaller discrepancy to V and I, i.e.,  $\text{MMD}(V,Z) \leq \text{MMD}(I,V)$ and $\text{MMD}(I,Z) \leq \text{MMD}(I,V)$. For example, on the SYSU-MM01 dataset, the discrepancy between the intermediate images generated using our \AG  method and I is 0.58, which is lower than the V-I discrepancy of 0.86. When compared to fixed gray images \cite{HAT}, which remain static and fail to adapt to the infrared spectrum, random gray images \cite{randLUPI} exhibit superior bridging capabilities. This advantage stems from their lack of bias towards specific color spectrums. In contrast, \AG images strike a balance between I and V modalities, leveraging adaptive information across color spectrums during generation.
This characteristic ensures that the intermediate space provides optimal information for the feature backbone to leverage common discriminative features between V and I images.
Also, for visual comparison, we show examples of images created by our approach in Fig. \ref{fig:gen_result}. The third column shows intermediate images that share the same pose as the first column (V), seek to incorporate the style of the second column (I), and lose the color information.

\floatsetup[figure]{style=plain,subcapbesideposition=center}
\begin{figure}[t]
  \centering
 \subfloat[][]{\includegraphics[width=.12\linewidth]{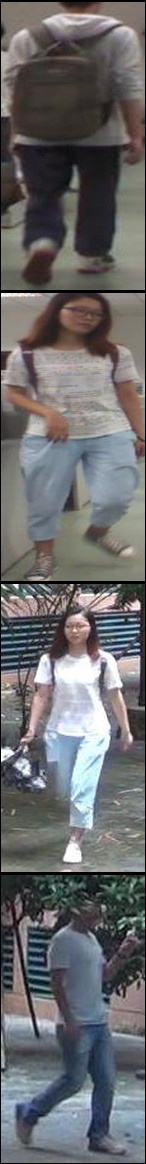}}
 \subfloat[][]{\includegraphics[width=.12\linewidth]{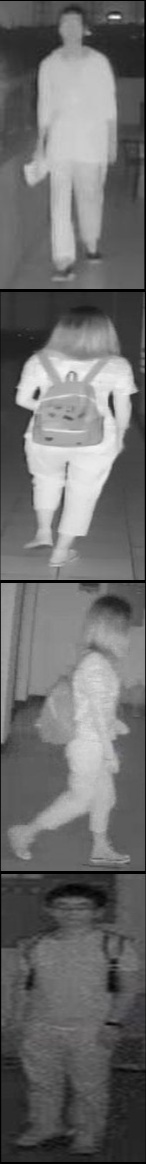}}  
 \subfloat[][]{\includegraphics[width=.12\linewidth]{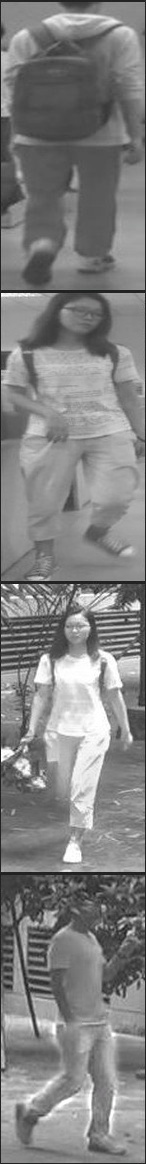}}  
 \subfloat[][]{\includegraphics[width=.12\linewidth]{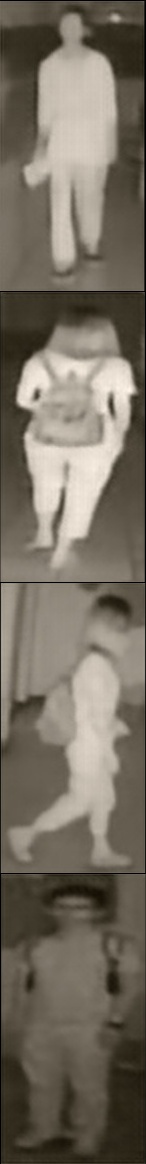}}  
 \subfloat[][]{\includegraphics[width=.12\linewidth]{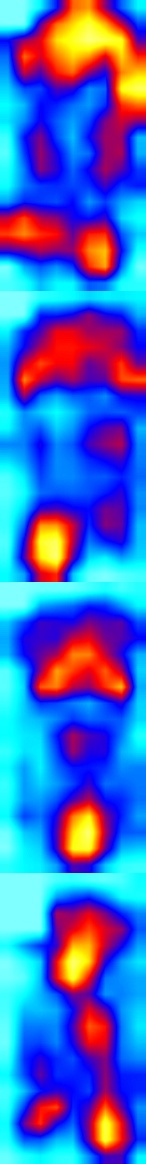}}  
 \subfloat[][]{\includegraphics[width=.12\linewidth]{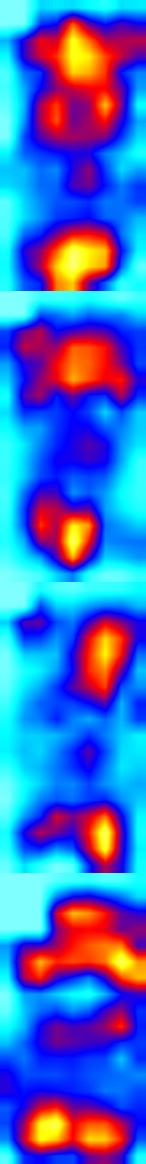}}  
 \subfloat[][]{\includegraphics[width=.12\linewidth]{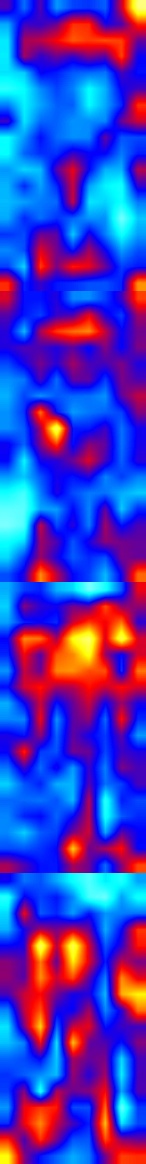}}  
 \subfloat[][]{\includegraphics[width=.12\linewidth]{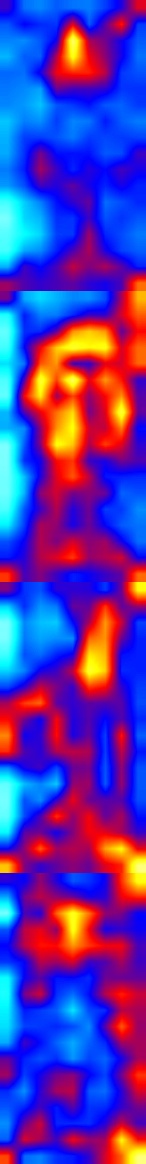}}  
  \caption{Examples of images generated with our \AG for SYSU-MM01 dataset. Columns (a) and (b) are V and I images. The intermediate and reconstructed I images are shown in (c) and (d), respectively. The GradCAM\cite{gradCAM} of our ID-Modality vs binary discriminator is shown in columns (e) to (h). Columns (e) and (f) for \AG discriminator for V and I images, respectively, and (g),(h) for Binary discriminator.}  
  \label{fig:gen_result}
\end{figure}  

\subsection{Ablation Studies} \label{sec:ablation}

A thorough ablation study is conducted to assess the impact of each module on \AG performance through various experiments on the SYSU-MM01 dataset. Initially, we compare our intermediate domain approach with other methods that use fixed and generated intermediates. Subsequently, to validate the roles of loss objectives, the generative module, and the ID-modality discriminator, we experiment with different settings and combinations with other methods.

\subsubsection{Effect of Intermediate Image Generation}
Table \ref{tab:gray-modal} shows the performance of cross-modality ReID with and without using an intermediate module. With the adaptive generated images (I-V-Z training), we achieve 56.35\%, while using the baseline model (I-V), we get 41.57\% mAP. Additionally, the adaptive version shows comparable performance in both V and I testing settings, making it an effective alternative for a robust person ReID. Table \ref{table:cross_SYSU_Result} provides different options for intermediate images and it is shown that, in comparison to other approaches using the intermediate domain, \AG has better performance.  

\setlength{\tabcolsep}{3.7pt}
\begin{table}[!t]
\centering
\caption{The impact on the accuracy of the proposed intermediate domain in SYSU-MM01 under the multi-shot setting.}
\label{tab:gray-modal}
\begin{tabular}{|c|c|c||r|r|}
\hline
 \multirow{2}{*}{\textbf{Training}} & \multicolumn{2}{c||}{\textbf{Testing}}                 & \multicolumn{1}{c|}{\multirow{2}{*}{\textbf{R1 (\%)}}} & \multirow{2}{*}{\textbf{mAP(\%)}} \\ \cline{2-3}
& \textbf{Query} & \textbf{Gallery} &  & \\ \hline \hline
\multirow{3}{*}{I-V}               & V              & V               & 97.40                                                  & 91.82                             \\ 
& I              & I               & 95.96                                                  & 80.49                             \\ 
                                       & I              & V               & 58.69                                                  & 41.57                             \\ 
\hline 
 \multirow{3}{*}{I-V-Z}             & V              & V               & 97.68                                                  & 90.19                             \\ 
                                     & I              & I               & 95.34                                                  & 81.62                             \\ 

                                    & I              & V               & 73.17                                                  & 56.35                             \\  \hline
\end{tabular}
\vspace{-.2cm}
\end{table}

\setlength{\tabcolsep}{3.4pt}
\begin{table}[!h]
\centering
\caption{Accuracy using different intermediate images on SYSU-MM01 using the \textbf{S}ingle and \textbf{M}ulti-shot gallery. "*" indicates that our model is trained on V images while tested with I. In "\$," we use I samples to train and test the model. "+" means that we only reported the MUN\cite{MUN} model performance of using auxiliary bringing part. 
}
\label{table:cross_SYSU_Result}
\resizebox{\linewidth}{!}{
\begin{tabular}{|l||c|c|c|c|}
\hline
\multirow{2}{*}{\textbf{Model}} & \multicolumn{2}{c|}{\textbf{R1(\%)}}                 & \multicolumn{2}{c|}{\textbf{mAP(\%)}}                 \\ \cline{2-5}
                       &  \textbf{S}     & \textbf{M}     & \textbf{S}    & \textbf{M}\\ \hline \hline
Pre-trained (test: I$\rightarrow$V)           & 2.26           & 3.31           & 3.61           & 1.71                                  \\ 
Lower (train: V, test: I$\rightarrow$V ) $^*$       & 7.41           & 9.86           & 8.83           & 6.83                                  \\ 
Upper (train: I, test: I$\rightarrow$I ) $^\$$   & 90.34          & 94.90          & 87.45          & 80.38                                 \\ \hline 

Baseline   (test: I$\rightarrow$V)            & 49.41          & 58.69          & 41.55          & 38.17                                 \\ \hline
Grayscale \cite{HAT} (test: I$\rightarrow$V)                 & 61.45          & 65.16          & 59.46          & 46.94                                 \\ 
RandG  \cite{randLUPI} (test: I$\rightarrow$V)              & 64.50          & 69.01          & 61.05          & 50.84                                \\ 
MUN$^+$ \cite{MUN} (test: I$\rightarrow$V)                 & 71.66          & -          & 68.35          & -                                 \\
\AG    (test: I$\rightarrow$V)          & \textbf{72.23} & \textbf{78.19} & \textbf{70.58} & \textbf{64.13}                       \\ \hline
\end{tabular}
}
\vspace{-.2cm}
\end{table}

\subsubsection{Effect of Losses}
In the following, we discuss the effect of each loss in \AG.

\noindent \textbf{Effect of Intermediate Dual Triplet Loss:}
Table \ref{table:losses} shows that when used in conjunction with the dual triplet loss, this loss provides supervision to improve performance at test time.
Although the ordinary triplet loss improves performance when it is integrated with $\mathcal{L}_{\textit{cf}}$, it ignores the privileged information provided by the intermediate domain. To show that fact, we compared the effect of our $\mathcal{L}_{\textit{dual}}$ to $\mathcal{L}_{\textit{tri}}$ by using each of them alongside other losses in the training process. As can be seen, using such regularization increases mAP and rank-1 by 3\% and 7\%, respectively, over ordinary triplet loss. 

\noindent \textbf{Effect of Color-Free Loss:}
Table \ref{table:losses} illustrates the impact of $\mathcal{L}_\text{cf}$. 
The results indicate that incorporating this regularizing loss leads to an increase of 3\% in mAP and 5\% in rank-1 accuracy when it is used without using $\mathcal{L}_{\text{tri}}$ or $\mathcal{L}_{\text{dual}}$. When the model is trained with $\mathcal{L}_{\text{cf}}$ and $\mathcal{L}_{\text{tri}}$ the performance significantly improves (19\% in mAP and 12\% in rank-1) and alongside with $\mathcal{L}_{\text{dual}}$ mAP and rank-1 improved over 6\% and 7\% respectively. As the intermediate images are derived from visible images with color information removed and infrared style information incorporated, $\mathcal{L}_\text{cf}$ plays a crucial role. This ensures that the extracted features from the visible modality are color-independent, aligning them more closely with the attributes of infrared.

\noindent \textbf{Effect of Adversarial Loss:}
As a result of the adversarial loss, the generator must adapt the reconstruction process, enabling the stylistic transformation of images from V to I.
Without using this objective, the generated images are similar to the gray-scale channel images and do help the model adjust for differences between I and V images. As you can see in the 4th row of Table. \ref{table:losses}, the performance is close to the model trained based on grey intermediate images in \cite{HAT}.


\noindent \textbf{Effect of Reconstruction Loss:}
To improve the quality of the reconstructed I images, supervised reconstruction loss is applied to the encoder and decoder parts. In the absence of this loss, the generated images lack sufficient important information to support the feature module and make the performance drop dramatically. As the color-free loss pushes the V features to be the same as the features of such not meaningful intermediate images, the mAP accuracy of matching between V and I decreases (from 41.55\% to 5.51\%) as seen in the 5th row of Table \ref{table:losses}.

\setlength{\tabcolsep}{3.3pt}
\begin{table}[!t]
\small
\centering

\caption{Impact on the accuracy of using different losses in our \AG on SYSU-MM01 in single-shot.}
\label{table:losses}

\resizebox{\linewidth}{!}{
\begin{tabular}{|c|c|c|c|c|c||r|r|}
\hline
\multicolumn{6}{|c||}{\textbf{Losses}} & \multirow{2}{*}{\textbf{R1 (\%)}} & \multirow{2}{*}{\textbf{mAP (\%)}} \\ \cline{1-6}
$\mathcal{L}_{\text{id}}$ & $\mathcal{L}_{\text{tri}}$ & $\mathcal{L}_{\text{dual}}$ & $\mathcal{L}_{\text{cf}}$ & $\mathcal{L}_{\text{adv}}$ & $\mathcal{L}_{\text{rec}}$  &                     &  \\ \hline   \hline                   
\cmark  & \cmark   &                 &               &              &              & 49.41                                   & 41.55                                    \\
\cmark         &                  & \cmark           &              & \cmark          & \cmark          & 60.64                                   & 56.15                                    \\
\cmark         &                   &                 &          & \cmark          & \cmark          & 40.16                                   & 35.89                                    \\

\cmark         &                   &                 & \cmark         & \cmark          & \cmark          & 45.56                                   & 41.33                                    \\
\cmark         &                  & \cmark           & \cmark         &                & \cmark          & 60.67                                   & 58.35                                    \\
\cmark         &                  & \cmark           & \cmark         & \cmark          &                & 9.32                                    & 5.51                                     \\
\cmark         &  \cmark                       &            &   \cmark       & \cmark          & \cmark          & 62.76                                   & 60.02       \\ 
\cmark         &  \cmark                       &     \cmark       &   \cmark       & \cmark          & \cmark          & 64.80                                   & 61.92       \\ 

\cmark         &                          & \cmark           & \cmark         & \cmark          & \cmark          & 67.29                                   & 63.86       \\ \hline                            
\end{tabular}
}
\vspace{-.2cm}
\end{table}

\subsubsection{Generation Model} \label{sec:exp_gen}

To find the highest ReID performance with respect to the quality of the generated intermediate images, we tested different generative models by adapting V to I modality or vice versa. Since V images have more information to discriminate between individuals in comparison to I \cite{SIMcm}, the transferred V images play a better role in providing privileged information in the training process. In Table \ref{tab:gen}, the model in \cite{van2017neural} is deeper (more layers) and has a better capacity to stylize the V images from the I feature vectors. Also, we tested a large state-of-the-art model \cite{choi2020starganv2}, which is used for multiple domain adaptation as our generator.

\begin{table}[!h]
\centering
\setlength\tabcolsep{2.5pt}
\caption{Accuracy with different options for the generation of intermediate images.}
\label{tab:gen}
\begin{tabular}{|l||ll|ll|l|}
\hline
\multirow{2}{*}{\textbf{Model}} & \multicolumn{2}{c|}{\textbf{V $\rightarrow$ I}}                              & \multicolumn{2}{c|}{\textbf{I $\rightarrow$ V}}                               & \multicolumn{1}{c|}{\multirow{2}{*}{\textbf{\# params}}} \\ \cline{2-5}
      & \multicolumn{1}{c|}{\textbf{R1}} & \multicolumn{1}{c|}{\textbf{mAP}} & \multicolumn{1}{c|}{\textbf{R1}} & \multicolumn{1}{c|}{\textbf{mAP}} & \multicolumn{1}{c|}{}                                    \\ \hline \hline
VQ-VAE2                                                & \multicolumn{1}{l|}{65.75}           & 61.96                                 & \multicolumn{1}{l|}{61.81}           & 58.34                                 &  20M                                                        \\
deep VQ-VAE2                                           & \multicolumn{1}{l|}{\textbf{67.29}}  & \textbf{63.86}                        & \multicolumn{1}{l|}{\textbf{63.45}}  & \textbf{60.09}                        &   28M                                                       \\
StarGAN                                               & \multicolumn{1}{l|}{66.21}           & 62.70                                 & \multicolumn{1}{l|}{62.41}           & 59.12                                 &     50M                                                    \\ \hline
\end{tabular}

\end{table}


\subsubsection{ID-modality Discriminator vs General Discriminator}
The ID-modality discriminator aims to discriminate modalities based on ID-aware features for each modality to help the generative module transform such features from V and I to intermediate. To show the effectiveness of such a module on ReID performance, we compared its results with those of a general binary discriminator that uses whole features.   
Table \ref{tab:disc} demonstrates a 4\% improvement in rank-1 accuracy and a 5\% enhancement in mAP measurement by introducing our ID-modality discriminator. The discriminator effectively identifies modality-specific, ID-related information, enabling the generative model to transform visible images into infrared while preserving the individuals' discriminative attributes. Consequently, the transformed images provide intermediate privileged information to the feature backbone, leading to a more robust reduction of modality discrepancies in the feature space. Additionally, Fig. \ref{fig:gen_result} highlights the key regions of the input images that the discriminator focuses on when making its decisions, demonstrating the relevance of the identified features.

\begin{table}[!h]
\centering
\setlength\tabcolsep{2.5pt}
\caption{Accuracy with different options for the discriminator and generation of intermediate images.}
\label{tab:disc}
\begin{tabular}{|l||ll|ll|}
\hline
\multicolumn{1}{|c||}{\multirow{2}{*}{\textbf{Model}}} & \multicolumn{2}{c|}{\textbf{V $\rightarrow$ I}}                              & \multicolumn{2}{c|}{\textbf{I $\rightarrow$ V}}                               \\ \cline{2-5}
                                & \multicolumn{1}{c|}{\textbf{R1}} & \multicolumn{1}{c|}{\textbf{mAP}} & \multicolumn{1}{c|}{\textbf{R1}} & \multicolumn{1}{c|}{\textbf{mAP}}                                     \\ \hline \hline
Binary Discriminator                                                & \multicolumn{1}{l|}{63.23}           & 58.84                                 & \multicolumn{1}{l|}{60.18}           & 57.32                                                                       \\
Our Discriminator                                           & \multicolumn{1}{l|}{\textbf{67.29}}  & \textbf{63.86}                        & \multicolumn{1}{l|}{\textbf{63.45}}  & \textbf{60.09}                                                                   \\ \hline
\end{tabular}

\end{table}

\subsection{Inference Model Efficiency } \label{sec:exp_gen}
Our \AG proposed method creates a privileged intermediate domain only in training to improve the ReID performance without adding extra computation overhead in the inference time. In Table \ref{tab:infrence_complex}, we compared our model's size and time complexity in the inference time with the baseline\cite{all-survey}, MPANet\cite{wu2021Nuances}, MMN\cite{zhang2021towards}, DTRM \cite{DDAG2}, RPIG \cite{randLUPI} and thGAN \cite{kniaz2018thermalgan}.

\begin{table}[ht]
\centering
\setlength\tabcolsep{2.5pt}
\caption{Number parameters and floating-point operations at inference time for AGPI$^2$ and state-of-the-art models. }
\label{tab:infrence_complex}
\begin{tabular}{|l|c|c|}
\hline
\textbf{Model} & \textbf{\# of Para. (M)} & \textbf{Flops (G)}  \\ \hline \hline
baseline \cite{all-survey}  &  \textbf{24.8}   & \textbf{5.2}    \\
\AG  &  \textbf{24.8}  & \textbf{5.2}  \\ \hline
MPANet \cite{wu2021Nuances}  &  30.1  & 6.7   \\
MMN \cite{zhang2021towards}  &  26.6  & 5.8    \\
DTRM \cite{DDAG2}  &  26.6  & 5.4  \\ 
RPIG \cite{randLUPI}  &  24.9  & \textbf{5.2}  \\ 
thGAN \cite{kniaz2018thermalgan}  &  35  & 8.9  \\ \hline
\end{tabular}

\end{table}

\subsection{Data Augmentation}
Random erasing augmentation \cite{zhong2020random} was assessed with different settings for the feature embedding and the generation modules. Table \ref{tab:aug} shows the accuracy of our proposed approach using different scenarios \cite{zhong2020random}. Despite each augmentation configuration improving performance, the best results are achieved when the augmentation probability is gradually increased during training.

\begin{table}[ht]
\small
\centering
\setlength\tabcolsep{2.5pt}
\caption{Matching accuracy with random erasing augmentation. $s$ is the size of the rectangle and $p$ is the probability of erasing. Note: "$a \rightarrow b$" means that training started from $a$ and finished with $b$.}
\label{tab:aug}
\begin{tabular}{|l||c|c||c|c|}
\hline
\textbf{Configurations} & \textbf{$p$}            & \textbf{$s$}         & \textbf{R1 (\%)} & \textbf{mAP(\%)} \\ \hline \hline
\AG                & 0                     & -                     & 67.29            & 63.86            \\
\AG + Aug1         & 0.40                   & 0.20                   & 69.81            & 65.19            \\
\AG + Aug2         & 0.50                   & 0.25                  & 70.93            & 66.15            \\
\AG + Aug3         & 0.50                   & 0.30                   & 69.39            & 64.95            \\
\AG + Aug4         & 0.30 $\rightarrow$ 0.50 & 0.20 $\rightarrow$ 0.30 & \textbf{72.23}   & \textbf{70.58}   \\ \hline
\end{tabular}

\end{table}

\begin{figure}[!ht]
\centering
\begin{minipage}{.5\linewidth}
\centering
\subfloat[Training set(baseline)]{\includegraphics[width=\linewidth]{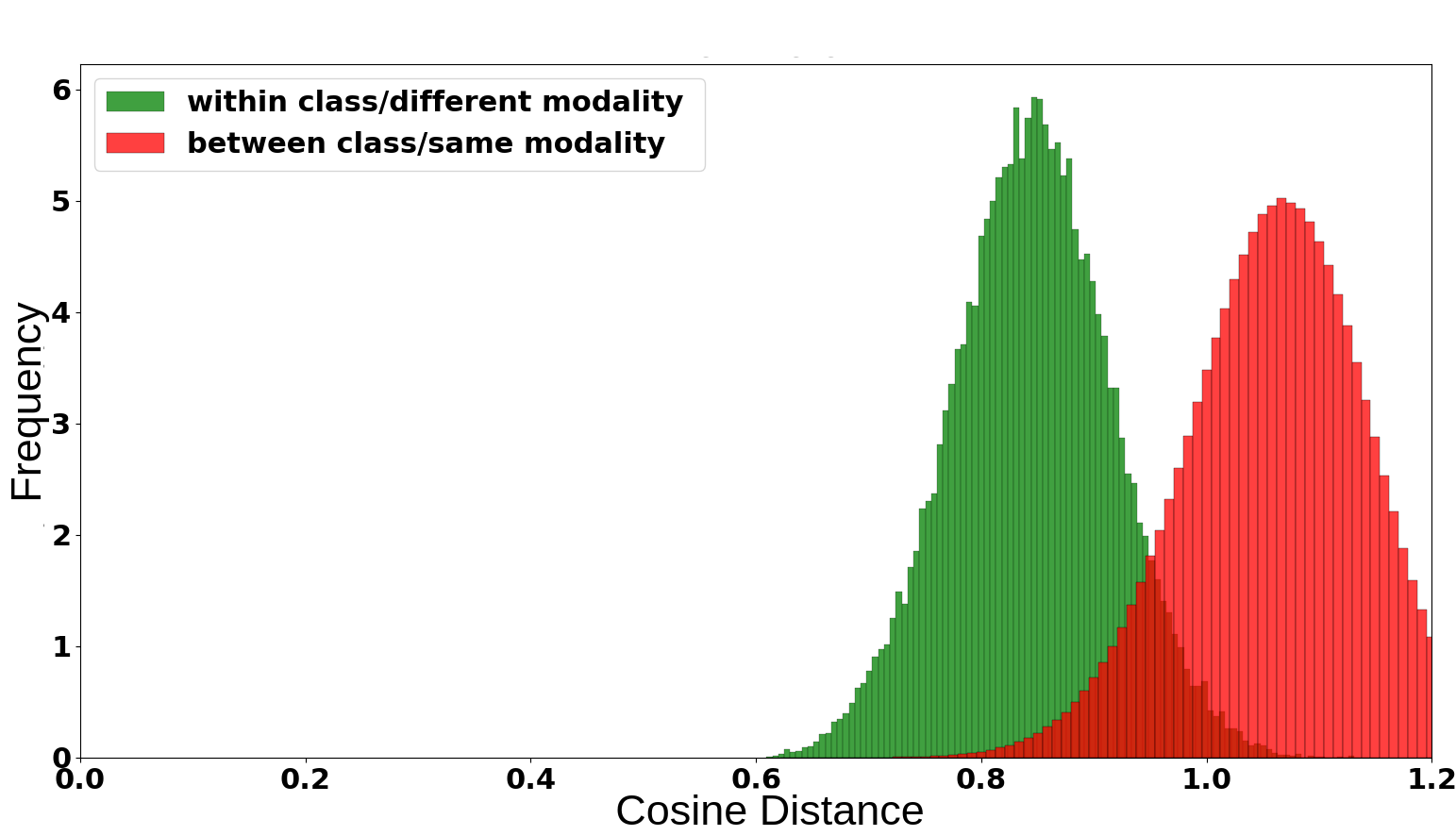}}
\end{minipage}%
\begin{minipage}{.5\linewidth}
\centering
\subfloat[Test set(baseline)]{\includegraphics[width=1\linewidth]{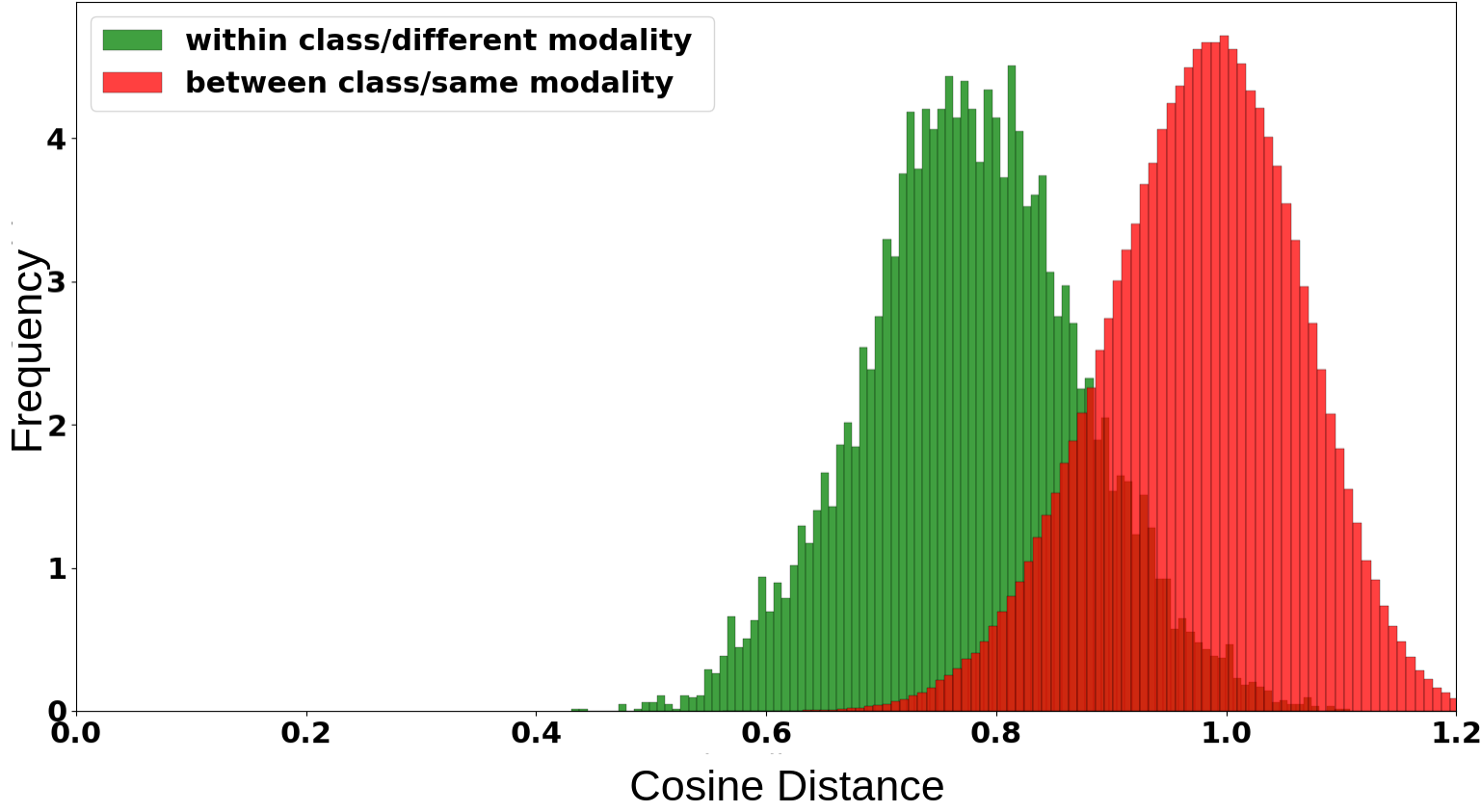}}
\end{minipage}\par\medskip 
\begin{minipage}{.5\linewidth}
\centering
\subfloat[Training set(\AG).]{\includegraphics[width=1\linewidth]{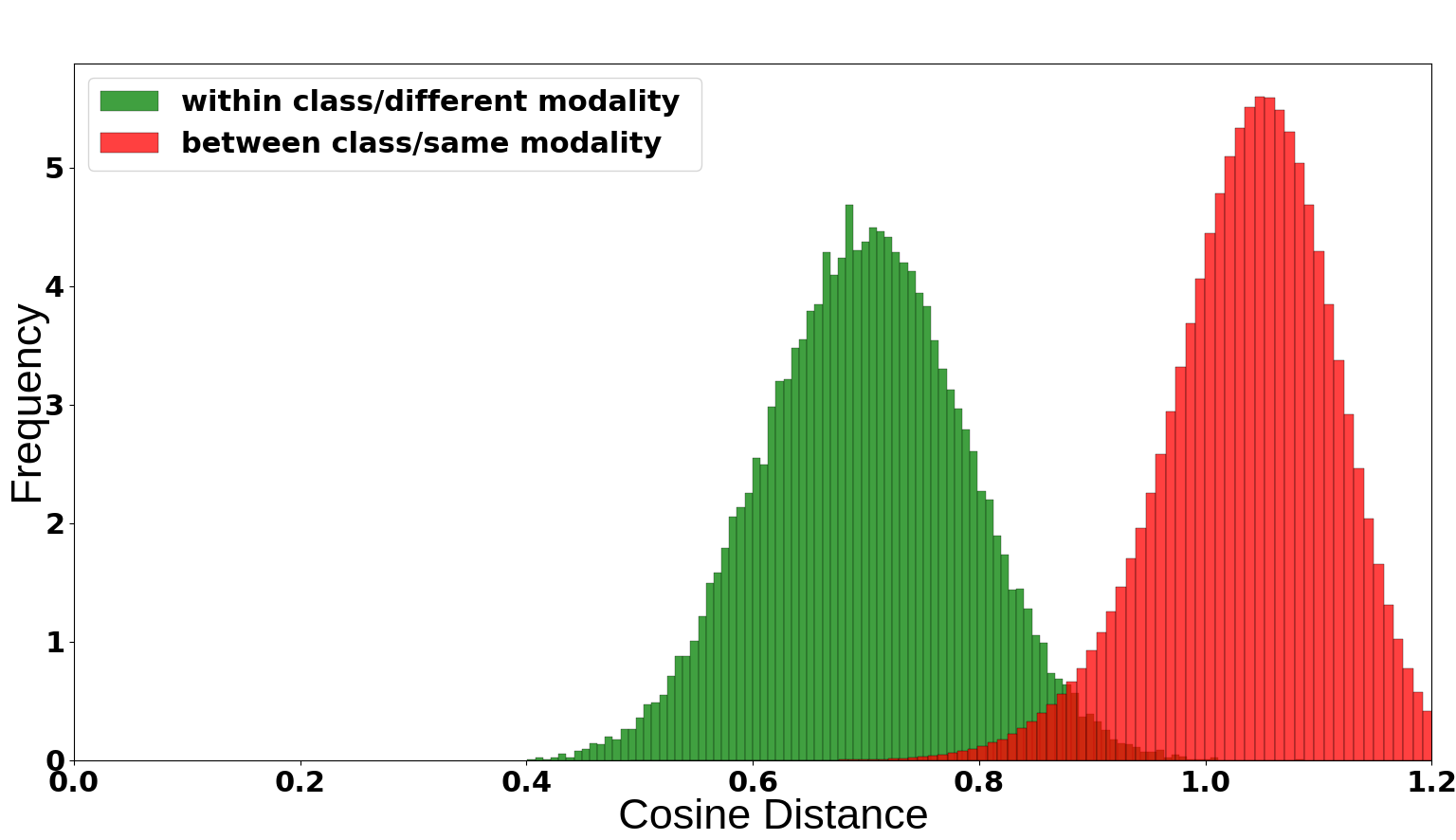}}
\end{minipage}%
\begin{minipage}{.5\linewidth}
\centering
\subfloat[Test set(\AG)]{\includegraphics[width=1\linewidth]{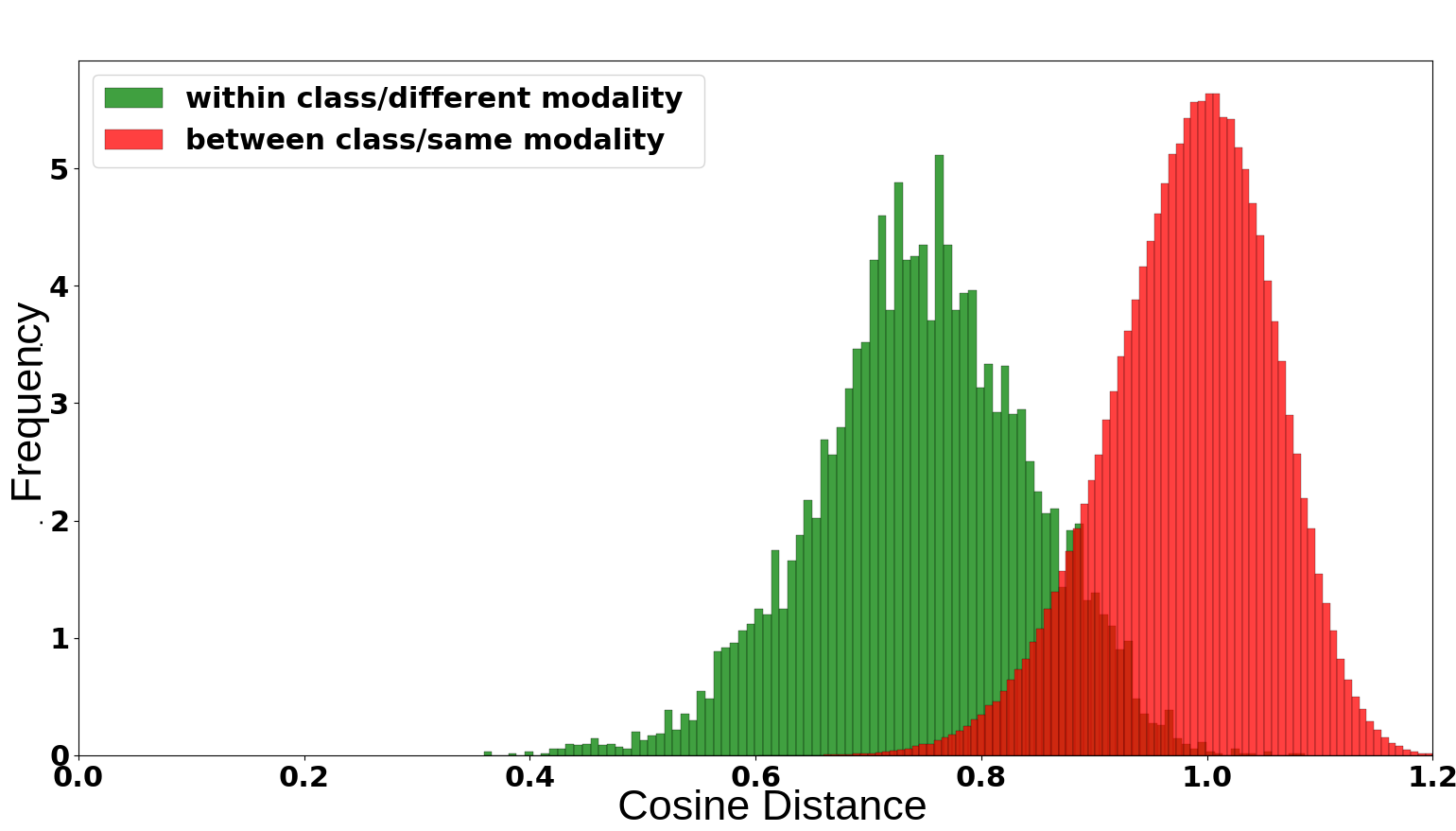}}
\end{minipage}\par\medskip
\caption{Distribution of between- and within-class distances on the  SYSU-MM01 dataset for (a,c) train and (b,d) test sets with the baseline model\cite{all-survey} and our \AG model.}
\label{fig:q1}
\end{figure}

\subsection{Qualitative Evaluation}

A visual representation of the cosine distance distributions for positive and negative cross-modality matching pairs is provided in Fig. \ref{fig:q1} for both training and testing sets. The proposed \AG approach is compared to a strong baseline (AGW model in \cite{all-survey}). 
After training, such relationships should be reversed so that the two strategies work effectively to decouple I from V. Despite this, \AG is more efficient than the baseline on the test set when combined with the adaptively generated images. By leveraging privileged intermediate information during training, \AG improves the model's ability to discriminate between V and I images at test time.
\begin{figure*}
\centering
\sidesubfloat[]{\includegraphics[width=0.9\linewidth]{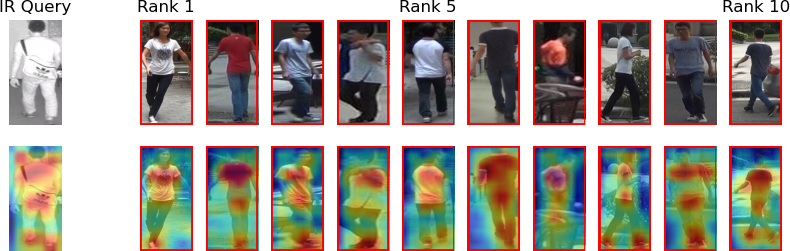}} \\
\sidesubfloat[]{\includegraphics[width=0.9\linewidth]{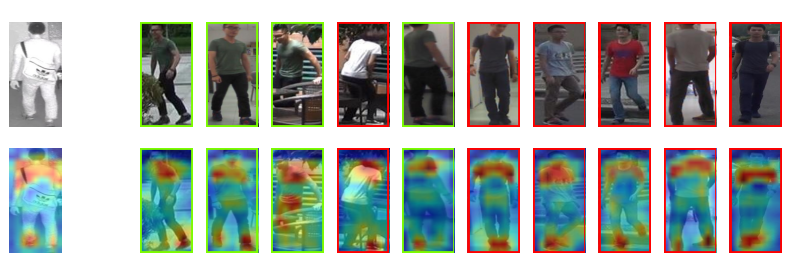}}
\caption{Rank-10 visualizations of the (a) baseline (b) \AG strategies on SYSU-MM01 data, along with their corresponding similarity CAM \cite{stylianou2019visualizing}. Green or red boxes respectfully surround correct and incorrect matches. There are only four visible cameras, and only one image from one of them is in the gallery. }
\label{fig:vizualizations_rank_10}
\end{figure*}

Fig. \ref{fig:vizualizations_rank_10} shows the rank-10 for a given class of the SYSU-MM01 dataset given by the AGW \cite{all-survey} (the baseline) and our proposed strategy, along with the corresponding similarity CAMs \cite{stylianou2019visualizing}. The query CAM is produced from the similarity of the best match. Compared to the baseline method, most discriminative regions and attention between them are refined and more focused. CAMs seem to focus on shoulders, hips, and feet for the baseline method, while the hip's attention almost fully disappears for our model. Interestingly, this behavior is consistent from one sample to another. Perhaps the hip focus at the baseline comes from the important color changes between the lower and upper bodies, which are probably discriminant for the visible part of the model. However, regarding the IR modality, this color marker is much less reliable and probably appears as a source of confusion for cross-modal ReID. This phenomenon tends to diminish thanks to the adaptive and informative intermediate privileged domain that is created through the training.    

To demonstrate the effectiveness of our proposed ID-Modality discriminator, inspired by \cite{temp1, temp2}, we compared its class activation map to that of a general binary discriminator, as shown in Fig. \ref{fig:gen_result}. The results indicate that the ID-Modality discriminator is more focused on the foreground, whereas the binary discriminator primarily detects the modality based on the background.

\section{Conclusion}

In this paper, a new framework is introduced for adapting privileged intermediate information called \AG. This method generates a virtual intermediate domain that bridges the gap between V and I images during training, thereby reducing the modality discrepancy problem. To create privileged intermediate information, an adversarial approach is also introduced for the generator, feature embedding, and ID-modality discriminator modules in the training process. AG outperforms state-of-the-art methods for V-I person ReID on the SYSU-MM01 and RegDB datasets while incurring no computational overhead during inference. 

The \AG training approach can be integrated into other SOTA methods for ReID to improve performance and generality. 
Compared to other intermediate generation methods, we show \AG approach can provide more balance between V and I information for the model to diminish the modality-specific attributes in the representation feature space. 
In future research endeavours, we aim to extend the application of the \AG approach to multi-step and gradual intermediate privileged learning, particularly to address more substantial domain shift challenges. This will necessitate the \AG model's ability to quantify the incorporation of I domain information during the transformation of V images into the intermediate space, facilitating a gradual adaptation process.




\bibliographystyle{IEEEtran}
\bibliography{egbib}


\newpage

 
\begin{IEEEbiography}
[{\includegraphics[width=1in,height=1.2in,clip,keepaspectratio]{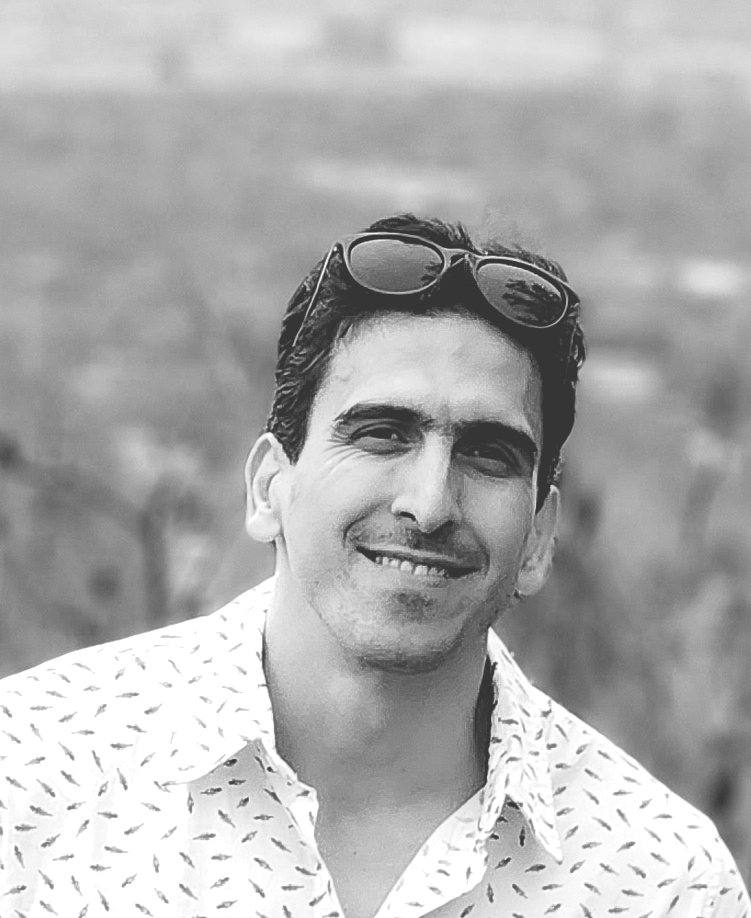}}] 
{Mahdi Alehdaghi} received an M.S. degree in artificial intelligence in 2015 from the Ferdowsi University of Mashhad, Mashhad-Iran. He is currently pursuing a Ph.D. degree in computer engineering from the École de Technologie Supérieure (ÉTS), Montréal, Quebec. His current research interests include computer vision, multimodal representations, and gradual and multi-step domain generalization.
\end{IEEEbiography}

\begin{IEEEbiography}
[{\includegraphics[width=1in,height=1.2in,clip,keepaspectratio]{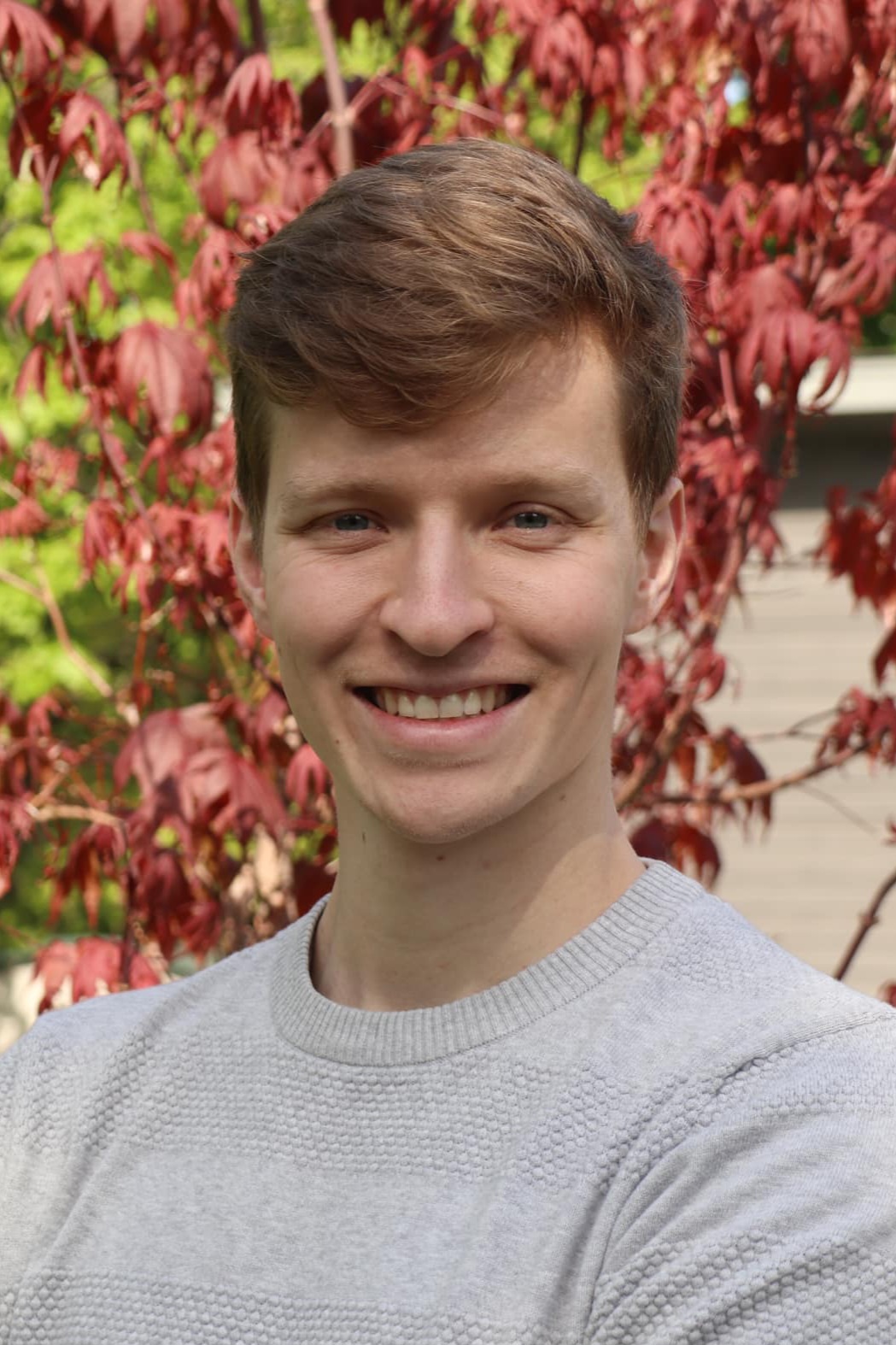}}] 
{Arthur Josi} received an M.Sc degree in artificial intelligence in 2023 from the 'École de technologie superieure (ÉTS)' Montréal, Québec. He is currently pursuing a PhD in computer engineering from the ÉTS. His current research interests are in computer vision, multimodal fusion, 3D facial animation, and deep generative models. 
\end{IEEEbiography}

\begin{IEEEbiography}
[{\includegraphics[width=1in,height=1.2in,clip,keepaspectratio]{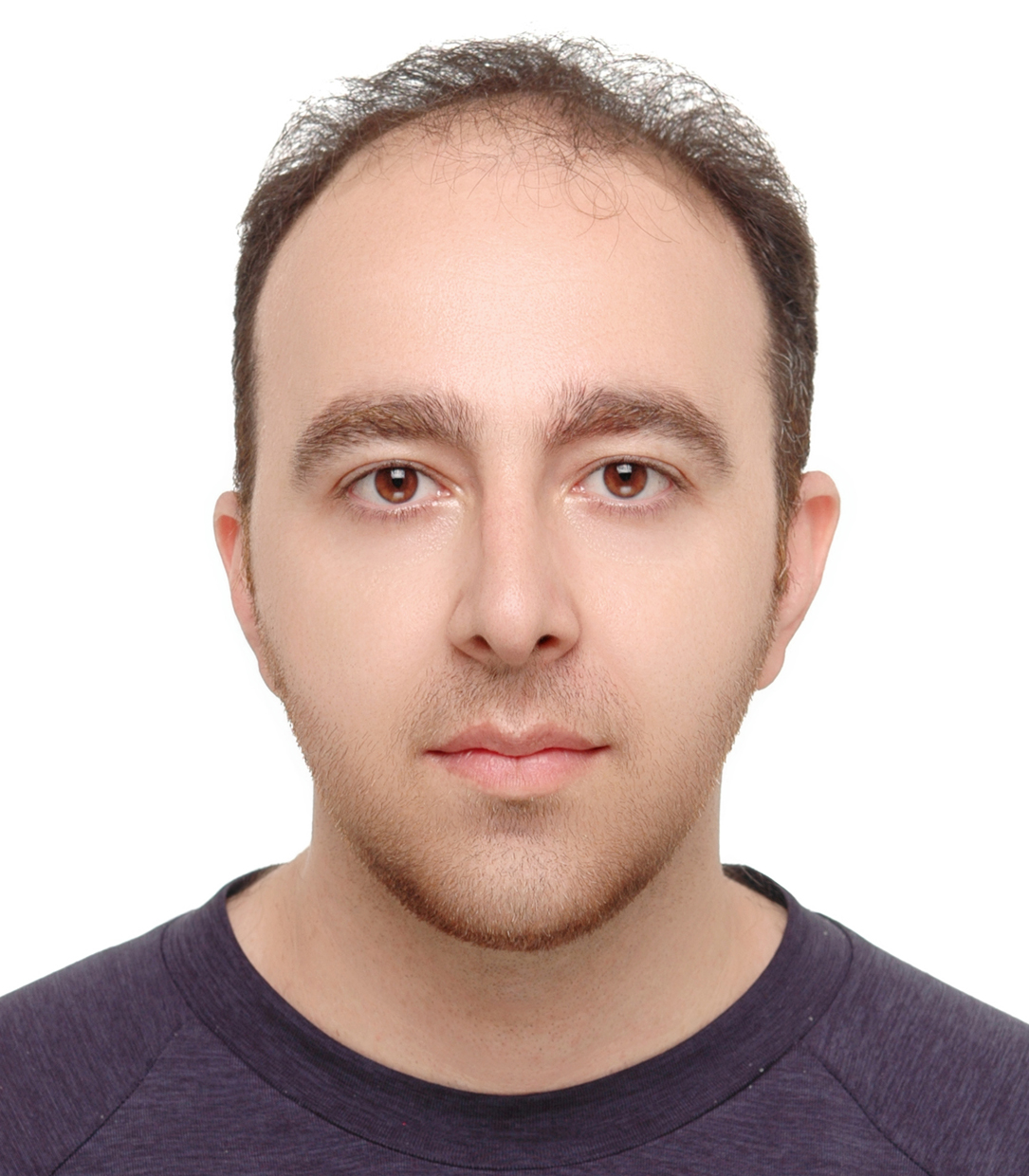}}] 
{Pourya Shamsolmoali} (Senior Member, IEEE) received the Ph.D. degree in computer science from Shanghai Jiao Tong University. He has been a Visiting Researcher with CMCC-Italy, INRIA-France, and ETS-Canada. He is currently a lecturer at the University of York with expertise in image processing, computer vision, and machine learning.
\end{IEEEbiography}
\begin{IEEEbiography}
[{\includegraphics[width=1in,height=1.2in,clip,keepaspectratio]{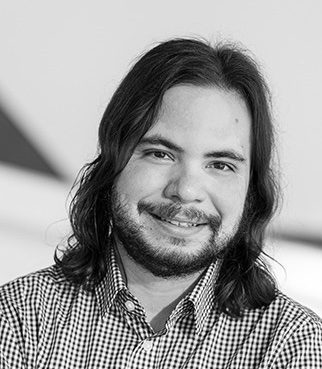}}] 
{Rafael M. O. Cruz} (Member, IEEE) received a Ph.D. in engineering in 2016 from the École de Technologie Supérieure (ÉTS), Montréal-Quebec, where he is currently an Associate Professor. His current research interests include ensemble learning, data complexity, dynamic ensemble models, meta-learning, and biometrics.
\end{IEEEbiography}
\begin{IEEEbiography}
[{\includegraphics[width=1in,height=1.2in,clip,keepaspectratio]{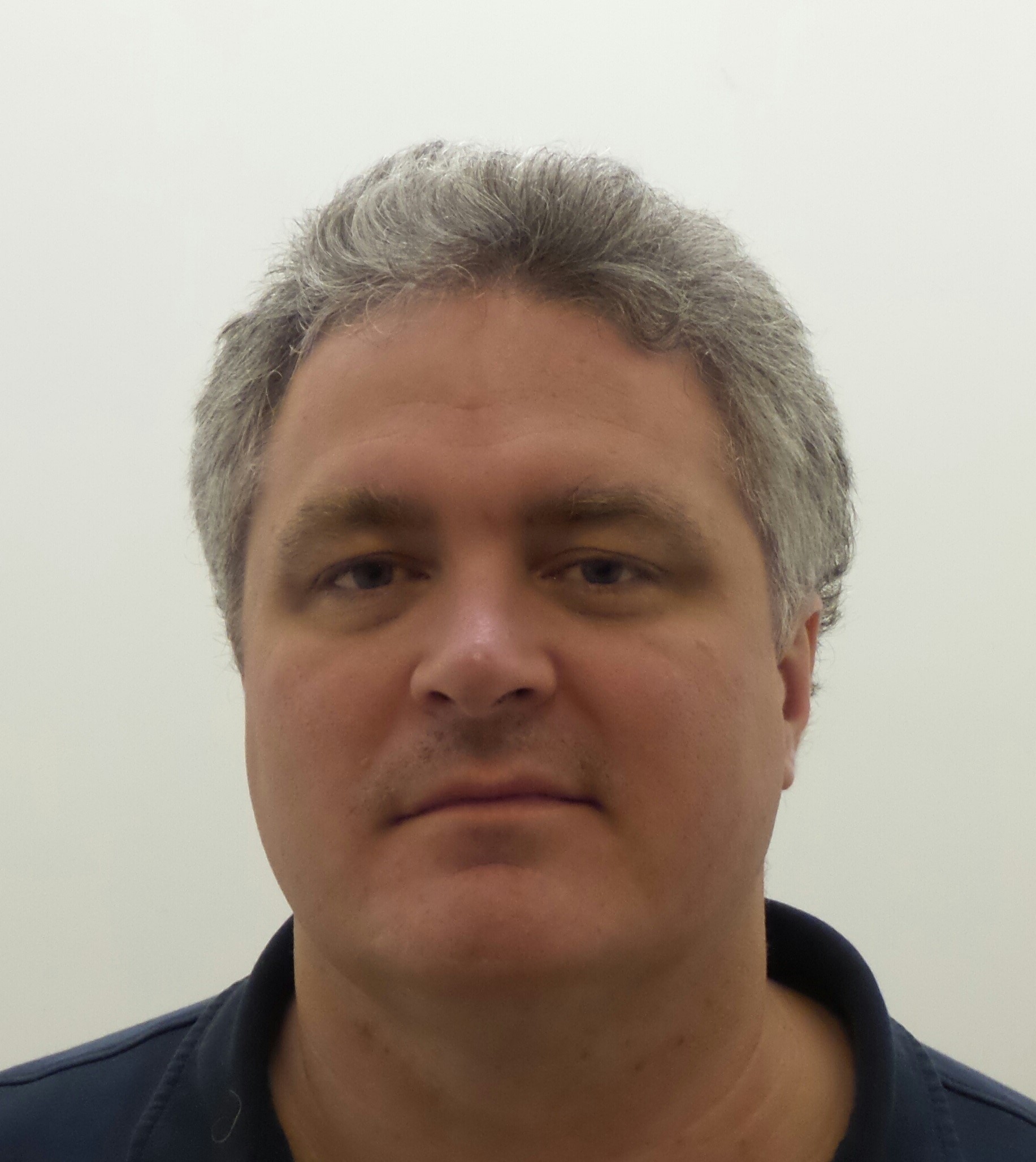}}]{Eric Granger } (Member, IEEE) received a Ph.D. degree in electrical engineering from École Polytechnique de Montréal in 2001. He was a Defense Scientist with DRDC Ottawa, from 1999 to 2001, and in Research and Development with Mitel Networks from 2001 to 2004. He joined the Department of Systems Engineering, ETS Montréal, Canada, in 2004, where he is currently a Full Professor and the Director of the LIVIA. He is the FRQS Co-Chair in AI and Health, and the ETS Industrial Research Co-Chair on embedded neural networks for intelligent connected buildings (Distech Controls Inc.). His research interests include pattern recognition, machine learning, information fusion, and computer vision, with applications in affective computing, biometrics, face recognition, medical image analysis, and video surveillance.
\end{IEEEbiography}
\vfill
\clearpage
\newtheorem{theorem}{Theorem}
\newtheorem{proof}{Proof}

\newtheorem{prop}{Proposition}


\appendices

\numberwithin{equation}{section}
\numberwithin{figure}{section}
\numberwithin{table}{section}
\renewcommand\thesubsectiondis{\arabic{subsection}.}

\setcounter{page}{1}

\section{Proofs} 
\subsection{Maximizing $\MI(Z;Y)$} \label{sec:prof1}
\begin{prop}
\label{prop:cross}
Let $Z$ and $Y$ be random variables with domains $\mathcal{Z}$ and $\mathcal{Y}$, respectively. Minimizing the conditional cross-entropy loss of predicted label $\hat{Y}$, denoted by $\mathcal{H}(Y; \hat{Y}|Z)$, is equivalent to maximizing the $\MI(Z; Y)$
\end{prop}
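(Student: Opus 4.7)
\subsection*{Proof plan for Proposition \ref{prop:cross}}

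The plan is to reduce the claim to the standard decomposition of mutual information into entropies and then recognize the cross-entropy as a variational upper bound on conditional entropy. First I would write
\begin{equation}
\MI(Z;Y) = \mathcal{H}(Y) - \mathcal{H}(Y \mid Z),
\end{equation}
and observe that $\mathcal{H}(Y)$ is determined solely by the marginal distribution of identity labels in the dataset, so it is a constant with respect to the parameters of the generator and the predictor used to form $\hat{Y}$. Consequently, maximizing $\MI(Z;Y)$ over those parameters is equivalent to minimizing $\mathcal{H}(Y \mid Z)$.

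Next I would relate $\mathcal{H}(Y \mid Z)$ to the conditional cross-entropy of the learned predictor. Denoting the true conditional by $p(y \mid z)$ and the model's predictive distribution by $q(\hat{y} \mid z)$, the key identity is
\begin{equation}
\mathcal{H}(Y;\hat{Y} \mid Z) \;=\; \mathcal{H}(Y \mid Z) \;+\; \mathbb{E}_{z \sim p(Z)}\bigl[\, D_{\mathrm{KL}}\!\bigl(p(\cdot \mid z) \,\|\, q(\cdot \mid z)\bigr) \,\bigr].
\end{equation}
This decomposition is obtained by expanding $-\mathbb{E}_{(Y,Z)}[\log q(Y \mid Z)]$, adding and subtracting $\log p(Y \mid Z)$, and grouping terms. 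Since the KL term is non-negative, the cross-entropy is a tight upper bound on $\mathcal{H}(Y \mid Z)$, attained precisely when $q(\cdot \mid z) = p(\cdot \mid z)$.

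To conclude, I would note that minimizing $\mathcal{H}(Y;\hat{Y} \mid Z)$ jointly over the encoder (which shapes $Z$) and the predictor $q$ forces both the conditional entropy $\mathcal{H}(Y \mid Z)$ and the KL gap toward their minima: the predictor is driven to match $p(Y \mid Z)$, and the encoder is driven to produce representations for which $Y$ becomes (nearly) deterministic. Combining with the first step yields
\begin{equation}
\min \; \mathcal{H}(Y;\hat{Y} \mid Z) \;\Longleftrightarrow\; \min \; \mathcal{H}(Y \mid Z) \;\Longleftrightarrow\; \max \; \MI(Z;Y),
\end{equation}
which is the stated equivalence. The only real subtlety, and the step I would write most carefully, is articulating the word ``equivalent'': the cross-entropy is an upper bound rather than equal to $-\MI(Z;Y)$ up to a constant, so the equivalence holds in the sense that the minimizer of the cross-entropy objective coincides with the maximizer of $\MI(Z;Y)$ once the predictor is expressive enough to realize $p(Y \mid Z)$. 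I would flag this assumption explicitly rather than obscure it.
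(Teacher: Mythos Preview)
Your proposal is correct and follows essentially the same route as the paper: decompose $\MI(Z;Y)=\mathcal{H}(Y)-\mathcal{H}(Y\mid Z)$, discard $\mathcal{H}(Y)$ as a constant, then use the identity $\mathcal{H}(Y;\hat{Y}\mid Z)=\mathcal{H}(Y\mid Z)+\mathcal{D}_{\mathrm{KL}}(Y\|\hat{Y}\mid Z)\geq \mathcal{H}(Y\mid Z)$ to conclude that minimizing the conditional cross-entropy upper-bounds and hence drives down $\mathcal{H}(Y\mid Z)$. Your explicit flagging of the expressivity assumption behind the word ``equivalent'' is, if anything, more careful than the paper's own treatment; the paper instead spends its last paragraph describing a two-step alternating optimization (classifier then encoder) to operationalize the bound, which you may wish to add as a brief remark.
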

\begin{proof}
    Let us define the MI as entropy,
    \begin{equation}
        \MI(Z;Y) = \underbrace{\mathcal{H}(Y)}_{\delta} - \underbrace{\mathcal{H}(Y|Z)}_{\xi}
    \end{equation}
    Since the domain $\mathcal{Y}$ does not change, the entropy of the identity $\delta$ term is a constant and can therefore be ignored. Maximizing $\MI(Z;Y)$ can only be achieved by minimizing the $\xi$ term. By expanding its relation to the cross-entropy \cite{boudiaf2020unifying}:
    \begin{equation}
        \label{eq:cross}
        \mathcal{H}(Y, \hat{Y}|Z) = \mathcal{H}(Y|Z) + \underbrace{\mathcal{D}_{\text{KL}}(Y||\hat{Y}|Z)}_{\geq 0} , 
    \end{equation}
    where:
    \begin{equation}
        \mathcal{H}(Y|Z) \leq \mathcal{H}(Y, \hat{Y}|Z).
    \end{equation}
    So, $\mathcal{H}(Y|Z)$ is upper-bounded by our cross-entropy loss, and minimizing such loss results in minimizing the $\xi$ term.

    Through the maximization ID-Aware part of Eq. \ref{eq:mi2}, training can naturally be decoupled in 2 steps. First, weights of the generative and feature backbone modules are fixed, and only the classifier parameters (i.e., the weight of the fully connected layer at the feature backbone) are minimized w.r.t. Eq. \ref{eq:cross}. Through this step, $\mathcal{D}_{\text{KL}}(Y||\hat{Y}|Z)$ is minimized by adjusting $\hat{Y}$ while the $\mathcal{H}(Y|Z)$ does not change.
    In the second step, the prototype module’s weights are minimized w.r.t. $\mathcal{H}(Y|Z)$, while the classifier parameters are fixed.
\end{proof}

\section{Additional Results}

\subsection{Hyperparameter Values} \label{sec:exp_hyper}
This subsection provides an analysis of the impact on ReID accuracy of the $\lambda_a$ and $\lambda_{cf}$ hyperparameters. $\lambda_a$ balances how much the model preserves the content of input images and the information of the style image, whereas $\lambda_{cf}$ pushes the similarity of the intermediate feature on V extracted features. Fig. \ref{fig:param_an} a) shows that accuracy improves when $\lambda_a$ increases until it reaches a value of 0.1. Then, the performance begins to decline. Similar results can be observed when $\lambda_{cf}$ varies from 1 to 10 (Fig. \ref{fig:param_an} b)). The performance of our proposed \AG strategies varies depending on these hyperparameter values, but most results exceed the state-of-the-art methods. 
\begin{figure}[ht]
\centering
\subfloat[Accuracy vs. $\lambda_{\text{adv}}$ value.]{\includegraphics[width=0.5\linewidth]{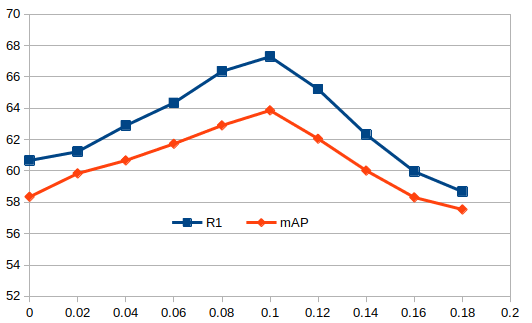}}
\centering
\subfloat[Accuracy vs. $\lambda_{cf}$ value.]{\includegraphics[width=0.5\linewidth]{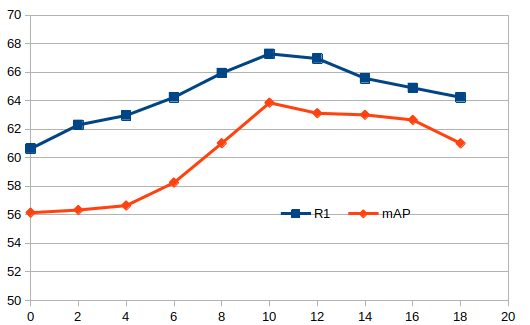}}
\caption{Impact on accuracy of $\lambda_{\text{adv}}$ and $\lambda_{cf}$ hyperparameters on the SYSU-MM01 dataset under the single-shot setting.}
\label{fig:param_an}
\end{figure}

\subsection{Comparison to CycleGAN}
 we conducted a new experiment to show the comparison between using intermediate and directly generated modalities in the training and inference phases separately. In Table. \ref{table:cycleGAN}, in the first two rows, the results for \AG are reported in which the generated images are used as an intermediate. The $\mathcal{L}_{\textit{dual}}$ has been applied in the training process. In comparison, in the two last rows, the generated images are used as other modality (Visible images are replaced with generated images in training) and the $\mathcal{L}_{\text{tri}}$ is applied between I and generated images. In the second and fourth rows, the generated images are used as the gallery instead of the visible images. \AG, which generates and uses images as an intermediate, performs better than CycleGAN\cite{CycleGAN2017} using generated images directly as another modality.  We can conclude that although the transformed images try to lose style information from the source and achieve them from the target domain, they are likely to have some content information from the target and style from the source, so the provided privileged information by them is not enough for the model to compensate the other modality in the training. 

\begin{table}[!t]
\small
\centering
\begin{tabular}{|l|c|c|c|c|c|c|}
\hline
\multicolumn{1}{|c|}{\multirow{2}{*}{\textbf{Method}}} & \multicolumn{2}{c|}{\textbf{Use}}                               & \multicolumn{2}{c|}{\textbf{Performance}}                           & \multicolumn{2}{c|}{\textbf{Complexity}} \\ \cline{2-7} 
\multicolumn{1}{|c|}{}                        & \textbf{Train} & \textbf{Test} & \textbf{R1(\%)} & \textbf{mAP(\%)} & \textbf{\# of Para.} & Flops \\ \hline
\multicolumn{7}{|c|}{Using generative images as intermediate in the training}   \\ \hline
\AG          &   \cmark    &  \xmark    & 72.23  & 70.58  & 24.8M           & 5.2G       \\
\AG                     &   \cmark    &   \cmark   & 65.76  & 62.23   & 52.8M            & 8.3G       \\ \hline
\multicolumn{7}{|c|}{Using generative images as other modality in the t
raining} \\ \hline
CycleGAN                &  \cmark    &  \xmark   & 42.05  & 41.17  & 24.8M          & 5.2G       \\
CycleGAN                &  \cmark &  \cmark & 26.91  & 28.75   & 42.5M            & 7.1G       \\ \hline
\end{tabular}
\captionsetup{width=\linewidth}
\caption{Impact of using generated images as intermediate versus using as visible in training or testing for SYSU dataset. \AG method uses them as intermediate in $\mathcal{L}_\textit{dual}$ and CycleGAN\cite{CycleGAN2017} uses as visible modality and applies the $\mathcal{L}_\textit{tri}$. }
\label{table:cycleGAN}
\end{table}

 \floatsetup[figure]{style=plain,subcapbesideposition=center}
\begin{figure}[!h]
  \centering
 \subfloat[][]{\includegraphics[width=.15\linewidth,height=.9\linewidth]{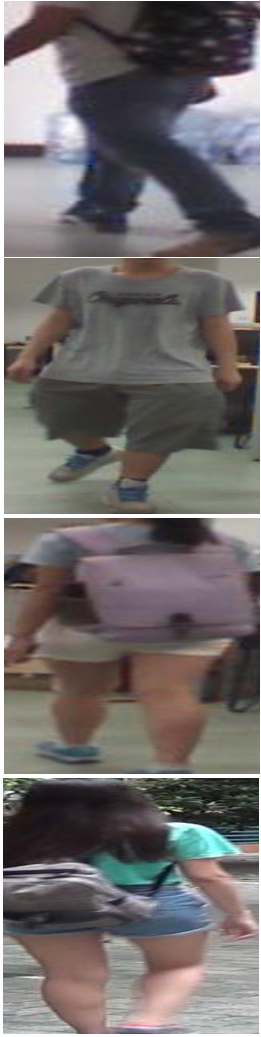}}
 \subfloat[][]{\includegraphics[width=.15\linewidth,height=.9\linewidth]{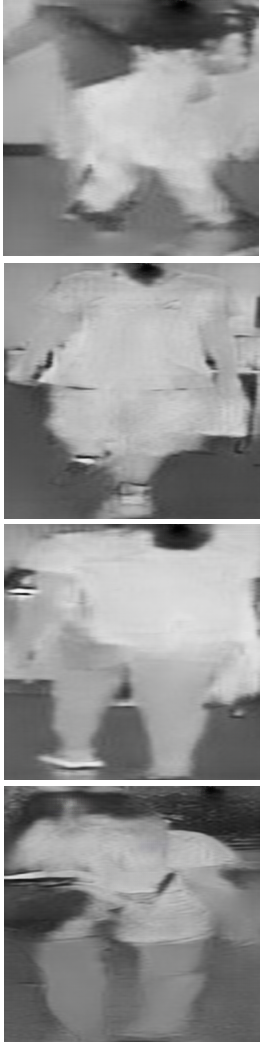}}  
 \subfloat[][]{\includegraphics[width=.15\linewidth,height=.9\linewidth]{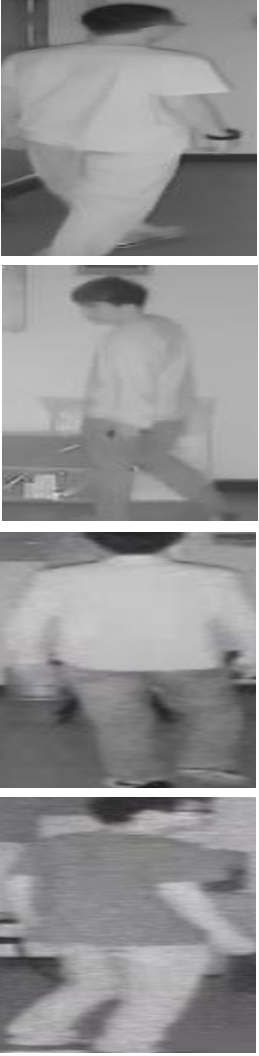}}  
 \subfloat[][]{\includegraphics[width=.15\linewidth,height=.9\linewidth]{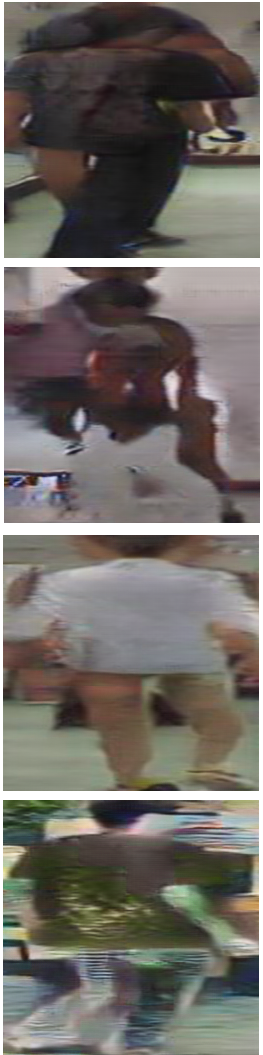}}   
  \caption{Examples of images generated with CycleGAN\cite{CycleGAN2017} for SYSU-MM01 dataset. Column (a) contains the real V, and column (b) is the translated visible image to infrared. Columns (c) and (d) are the real I images and translated versions of them to the V domain.   }  
  \label{fig:cycleGAN_result}
\end{figure} 

Additionally, generating the transformed images in testing makes the model trigger the GAN module and increases the model complexity. Also, such generated images may lose some information compared to the original images and degrade the matching performance. These images are shown in Fig. \ref{fig:cycleGAN_result}.

\subsection{Feature Visualization}
To demonstrate that the intermediate images in \AG training approach bridge between visible and infrared extracted features, we visualized the feature distributions of the intermediate, visible, and infrared representations during the training process. As shown in Fig. \ref{fig:umap_z}, the intermediate features are positioned between the visible and infrared features. The visible features gradually align with the intermediate feature space when we apply $\mathcal{L}{\text{cf}}$ and the first part of $\mathcal{L}{\text{dual}}$. As the model optimizes the second part of $\mathcal{L}_{\text{dual}}$, the infrared features progressively discard modality-specific information and converge toward the intermediate feature space. By the end of the training, the visible and infrared features had converged to a small region of the feature space, indicating that they had lost their modality-discriminative characteristics.

\begin{figure*}[!h]
  \centering
  \begin{minipage}{.45\linewidth}
    \centering
    \subfloat[Training epoch 0]{\includegraphics[width=\linewidth]{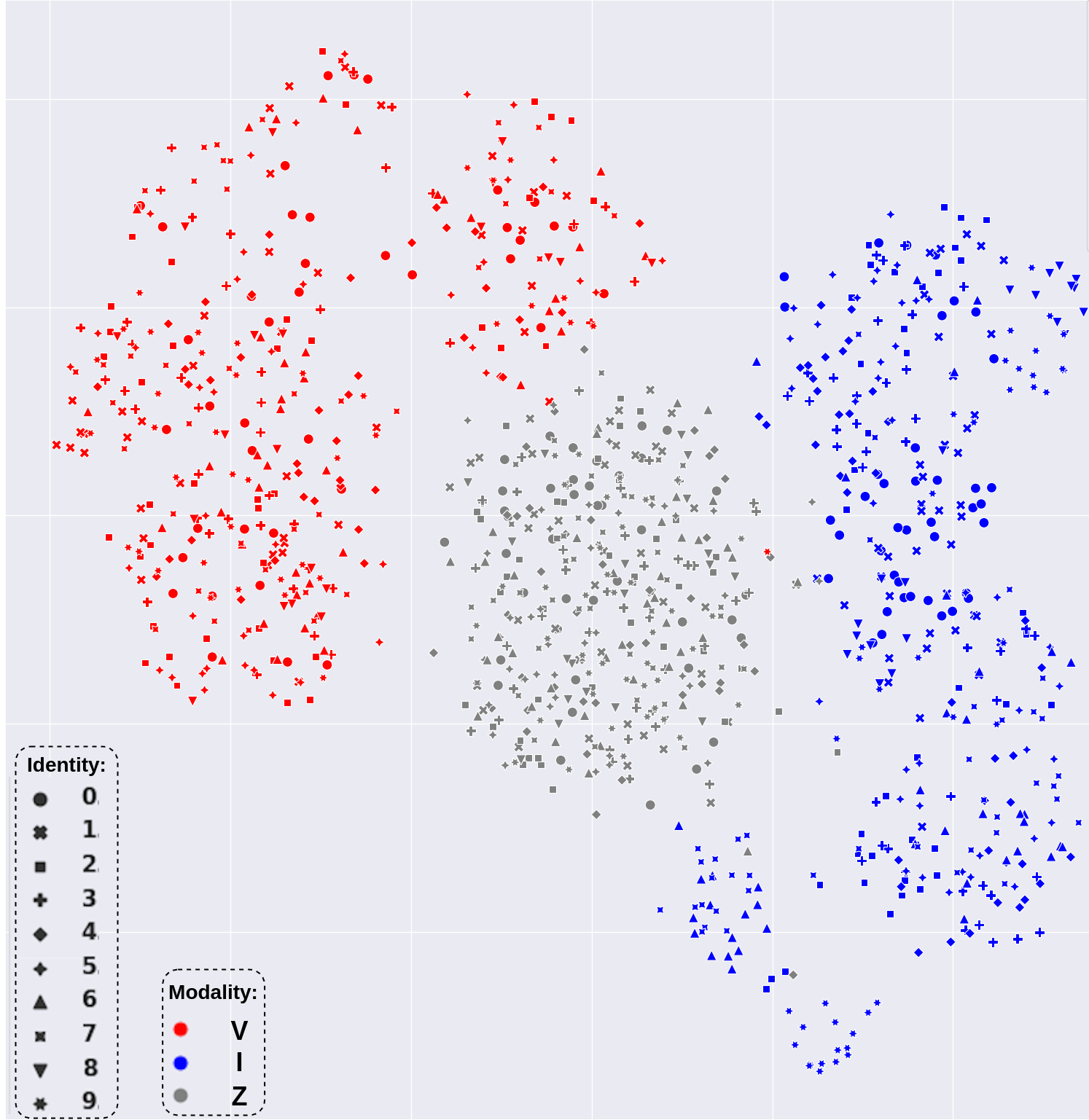}}
  \end{minipage}%
  \hfill
  \begin{minipage}{.45\linewidth}
    \centering
    \subfloat[Training epoch 30]{\includegraphics[width=\linewidth]{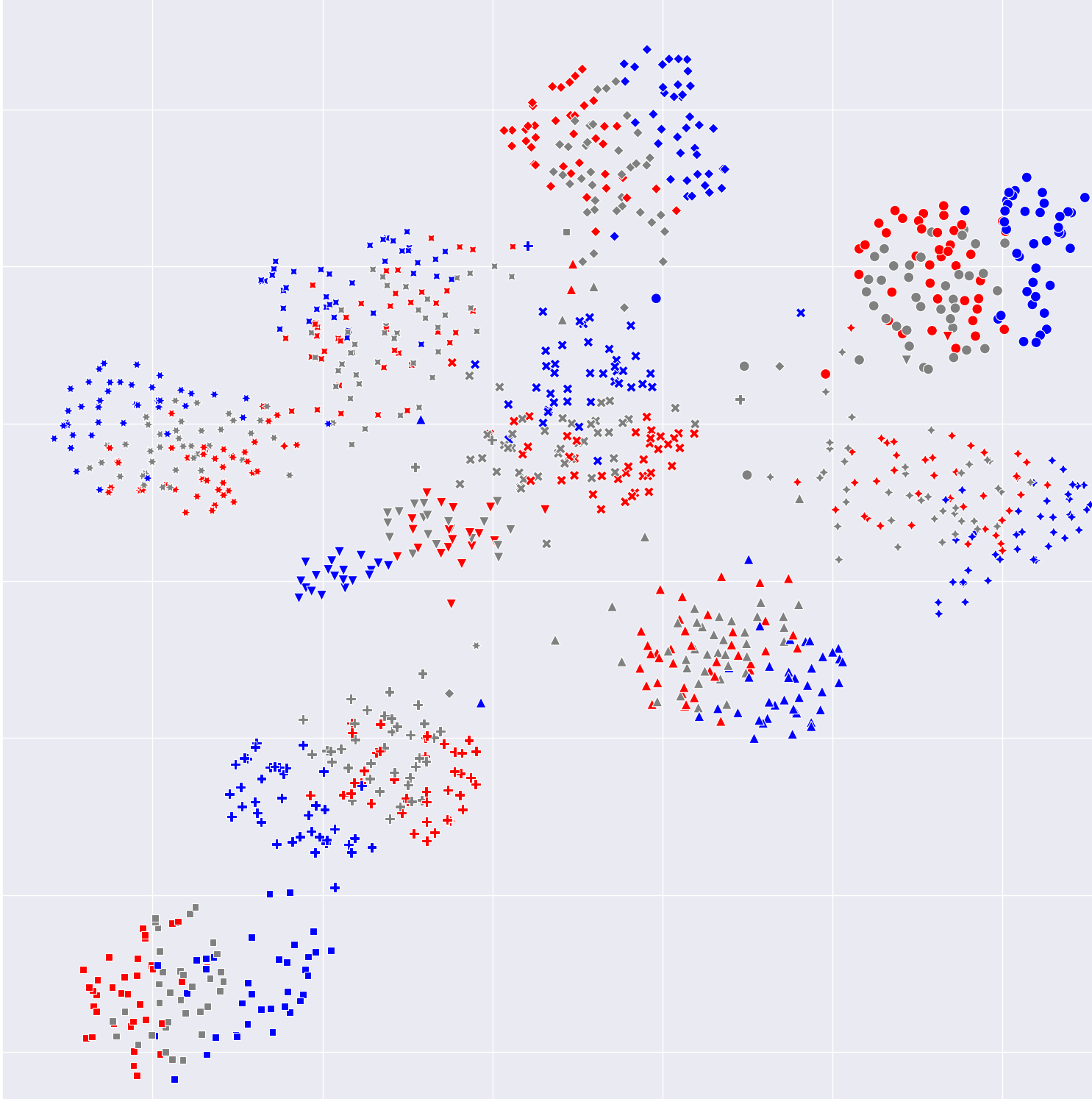}}
  \end{minipage}\par\medskip 
    \begin{minipage}{.45\linewidth}
    \centering
    \subfloat[Training epoch 50]{\includegraphics[width=\linewidth]{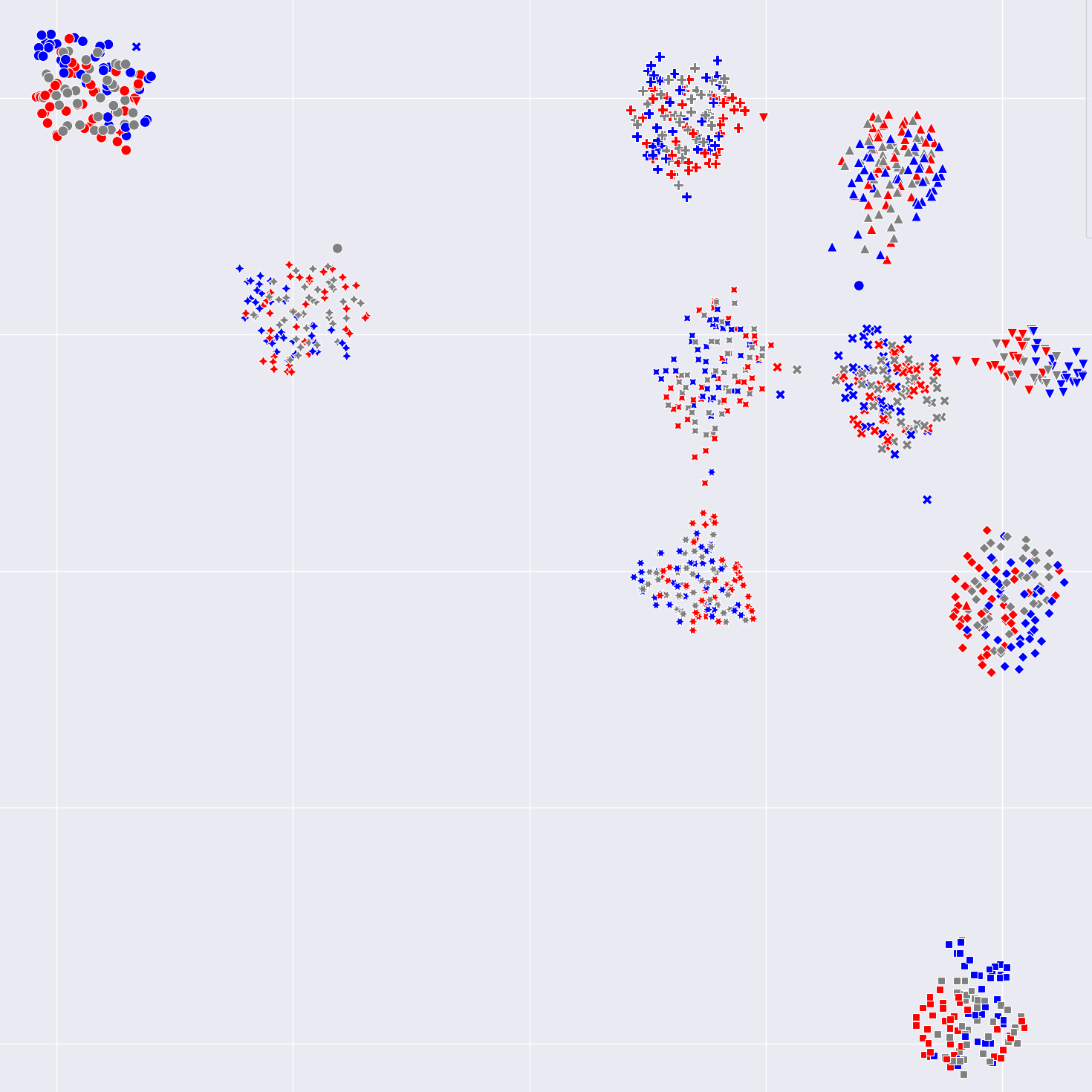}}
  \end{minipage}%
  \hfill
  \begin{minipage}{.45\linewidth}
    \centering
    \subfloat[Training epoch 70]{\includegraphics[width=\linewidth]{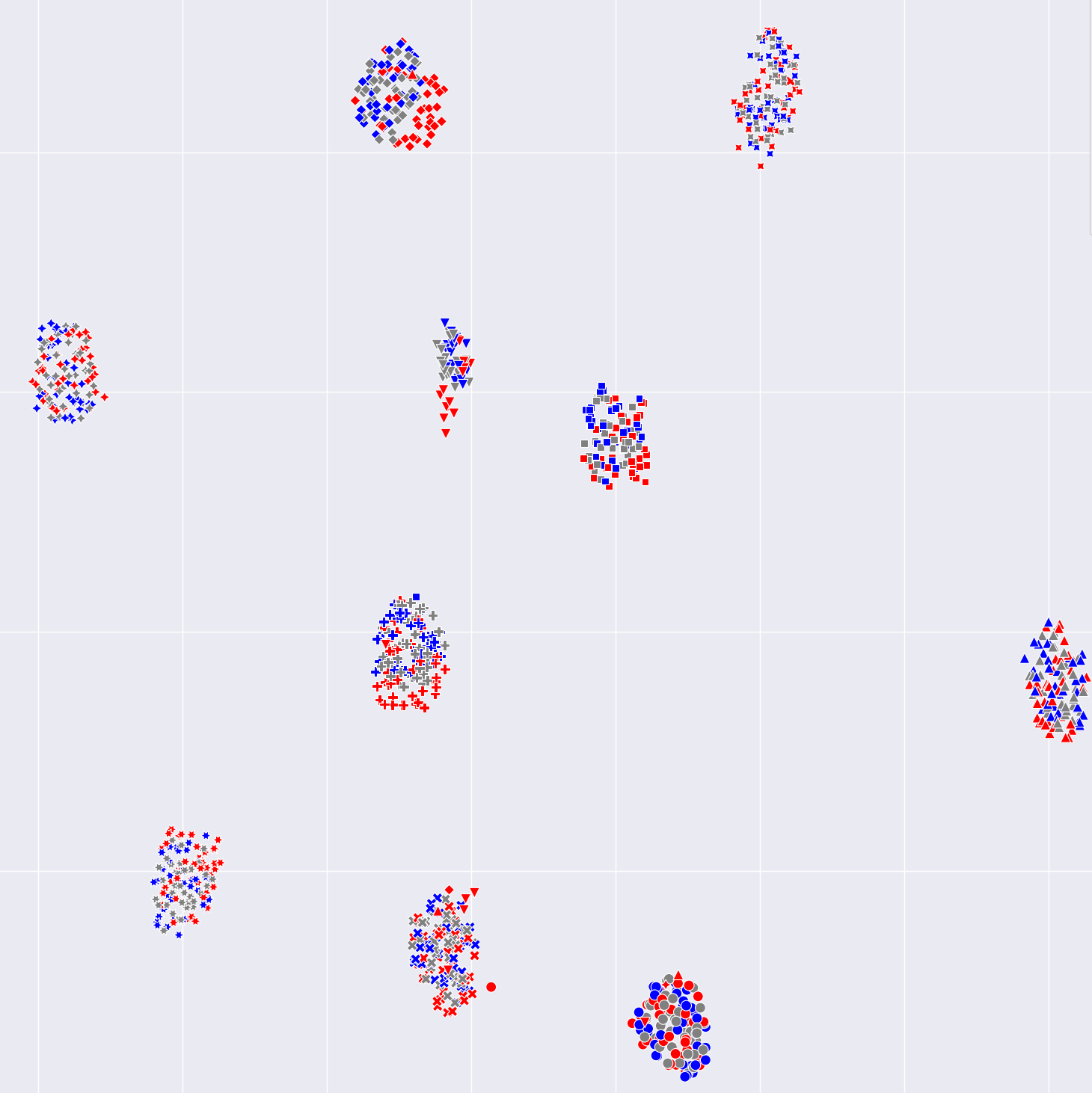}}
  \end{minipage}\par\medskip 
\caption{UMAP \cite{mcinnes2018umap} visual representation of ten randomly selected identities from the SYSU-MM01 dataset through training process. Identities are shown with different shapes, and modalities are shown with different colors (red and blue for visible and infrared and intermediate features are shown in gray color).}
\label{fig:umap_z}
\end{figure*}

\subsection{MMD Distance on extracted learned features:}
The objective of the intermediate feature is to serve as a bridge between visible and infrared information by providing privileged identity-discriminative information while eliminating modality-specific style information. To demonstrate the reduction in the domain gap achieved through the use of intermediate features, we applied the kernel Maximum Mean Discrepancy (MMD) function to the extracted features from visible, infrared, and intermediate images.
To measure MMD between modalities, we compute the center feature vector for each person within each modality and apply the MMD kernel to these vectors. Since, during the inference phase, infrared images are used as queries to the model in order to identify the matched person from the gallery, the distance between infrared (I) and intermediate (Z) features is crucial. As shown in Table \ref{tab:MMD2}, the use of intermediate features significantly reduces this distance compared to the baseline.

\begin{table}[!tb]
\small
\centering
\resizebox{\textwidth}{!}{
\begin{tabular}{|l||c|c|c||c|c|c|}
\hline 
\multirow{2}{*}{\textbf{Modality}}  & \multicolumn{3}{c||}{\textbf{Train}} & \multicolumn{3}{c|}{\textbf{Test}} \\ \cline{2-7}
                                    & \textbf{I-Z}          & \textbf{V-Z} &
                                    \textbf{I-V} ($\downarrow$ )     &\textbf{I-Z}         & \textbf{V-Z}   & \textbf{I-V} ($\downarrow$ )      \\ \hline  \hline
baseline          & -      & -  &  0.12  & -      & - & 0.53 \\ 
Grayscale (HAT \cite{HAT})          & 0.07      & 0.03  & 0.08   & 0.46      & 0.09 & 0.45 \\ 
Random(RPIG \cite{randLUPI}) & 0.05      & 0.04  & 0.06   & 0.41      & 0.11 & 0.38 \\ \hline
\AG (ours)                 &          0.03      & 0.05 & \textbf{0.05}    & 0.38     & 0.16 & \textbf{0.33} \\ \hline
\end{tabular}
\caption{MMD between V, I, and different intermediate learned features for 100 and 50 different persons from train and test sets, respectively, on the SYSU-MM01. Z images are the intermediate images reconstructed by $\mathcal{G}$.
}
\label{tab:MMD2}
}
\end{table}

\subsection{Comparing to Image-based Intermediate method in VI-ReID}
We also conducted a new experiment showing the qualitative and quantitative comparison between CycleGAN\cite{CycleGAN2017}, ThermalGAN\cite{kniaz2018thermalgan}, D$^2$RL\cite{D2RL}, HI-CMD\cite{HI-CMD} and MMN\cite{zhang2021towards} and our \AG.
We used the official code published by the authors of these methods and trained them with default hyperparameters.

\begin{table}[!ht]
\small
\centering

\resizebox{\textwidth}{!}{
\begin{tabular}{|l|c|c|c|c|c|c|}
\hline
\multicolumn{1}{|c|}{\multirow{2}{*}{\textbf{Method}}} & \multicolumn{2}{c|}{\textbf{Use}}                               & \multicolumn{2}{c|}{\textbf{Performance}}                           & \multicolumn{2}{c|}{\textbf{Complexity}} \\ \cline{2-7} 
\multicolumn{1}{|c|}{}                        & \textbf{Train} & \textbf{Test} & \textbf{R1(\%)} & \textbf{mAP(\%)} & \textbf{\#Para.} & Flops \\ \hline
CycleGAN\cite{CycleGAN2017}                &  \cmark    &  \xmark   & 42.05  & 41.17  & 24.8M          & 5.2G       \\
CycleGAN                &  \cmark &  \cmark & 26.91  & 28.75   & 42.5M            & 7.1G       \\ \hline 
thGAN\cite{kniaz2018thermalgan}                &  \cmark    &  \xmark   & -  & -  & -          & -       \\
thGAN                &  \cmark &  \cmark & 17.77  & 18.13   & 70.0M            & 13.7G       \\ \hline 
AlignGAN\cite{Wang_2019_ICCV_AlignGAN}                &  \cmark    &  \xmark   & 24.81  & 22.05  & 23.5M          & 5.2G       \\
AlignGAN          &  \cmark &  \cmark & 42.4  & 40.7   & 50.4M            & 9.6G       \\ \hline 
HiCMD\cite{HI-CMD}                &  \cmark &  \cmark & 34.94  & 35.94   & 40.1M            & 7.3G       \\ 
HiCMD                &  \cmark &  \cmark & 15.57  & 17.38   & 100.6M            & 13.5G       \\ \hline

MMN\cite{zhang2021towards}                &  \cmark    &  \xmark   & 65.14  & 61.57 & 72.9M          & 6.1G       \\
MMN(Only Z)                &  \cmark &  \cmark & 62.96  & 58.34   & 35.4M            & 6.2G   \\    
MMN (Z,I,V)               &  \cmark &  \cmark & 70.1  & 66.2   & 72.9M            & 12.1G   \\    \hline
\hline
\AG          &   \cmark    &  \xmark    & 72.23  & 70.58  & 24.8M           & 5.2G       \\
\AG                     &   \cmark    &   \cmark   & 65.76  & 62.23   & 52.8M            & 8.3G       \\ \hline

\end{tabular}
}
\captionsetup{width=\linewidth}
\caption{Impact of using generated images as intermediate versus using as visible in training or testing for SYSU dataset. \AG method uses them as intermediate in $\mathcal{L}_\textit{dual}$ and CycleGAN\cite{CycleGAN2017} uses as visible modality and applies the $\mathcal{L}_\textit{tri}$.} 
\label{table:GANS}
\end{table}

Table \ref{table:GANS} presents a comparison of open-sourced generative image-based methods for Visible-Infrared Person Re-Identification (VI-ReID) on the SYSU-MM01 dataset. For each method, we report the performance in terms of Rank-1 accuracy and mAP, alongside the computational complexity measured by the number of parameters and FLOPs. The table highlights the differences between using generated images as an intermediate representation during training versus as a matching modality during testing.  

The MMN \cite{zhang2021towards} approach utilizes generated intermediate images during both training and testing. Specifically, MMN extracts features from visible, infrared, and generated (Z) images, combining them for improved matching. While this strategy boosts performance, as evidenced by the 70.1\% Rank-1 accuracy and 66.2\% mAP in the (Z, I, V) setup, it significantly increases computational overhead, with a FLOPs count of 12.1G—almost double that of single-modality models. 

In contrast, our proposed \AG{} method treats generated images as privileged information during training, incorporating them into the dual loss \( \mathcal{L}_\textit{dual} \), without altering the backbone for testing. This leads to significantly lower computational overhead, with only 8.3G FLOPs in the test phase, while maintaining competitive performance. Notably, when the generated images (Z) are used as a query instead of infrared images (I), \AG{} achieves a Rank-1 accuracy of 65.76\%, outperforming MMN (62.96\%), which demonstrates the informativeness and bridging capability of the generated images. 

Furthermore, \AG{} not only delivers superior matching performance but also minimizes computational costs compared to methods like CycleGAN \cite{CycleGAN2017} and thermalGAN \cite{kniaz2018thermalgan}, which require substantial resources due to their use of generated images during both training and testing. Overall, \AG{} balances effectiveness and efficiency, making it a compelling solution for VI-ReID tasks with constrained computational budgets.

Also, we generated some samples from the SYSU-MM01 dataset in \autoref{fig:inter_result} for each method. Our \AG produces more realistic images with more details compared to others.

\begin{figure*}[!ht]
  \centering
 \subfloat[][]{
  \centering
  \frame{\includegraphics[width=1.75cm,height=13cm]{images/all-vh.jpg}}
  }
\subfloat[][]{
  \centering
  \frame{\includegraphics[width=1.75cm,height=13cm]{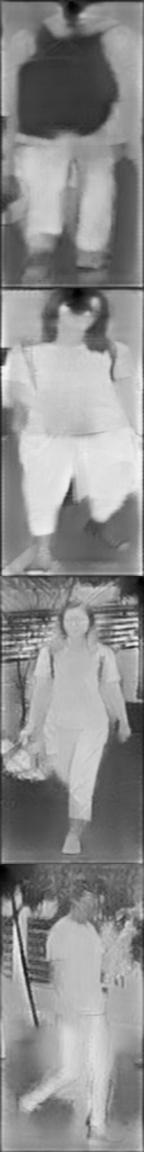}}
}
\subfloat[][]{
  \centering
  \frame{\includegraphics[width=1.75cm,height=13cm]{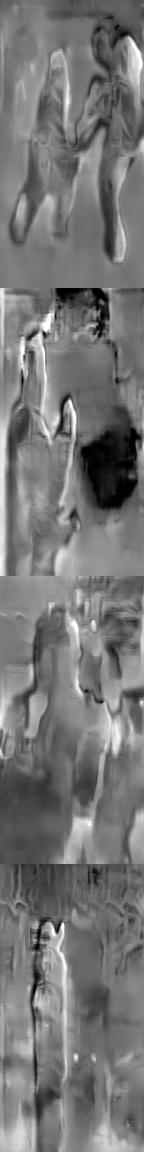}}
}%
\subfloat[][]{
  \centering
  \frame{\includegraphics[width=1.75cm,height=13cm]{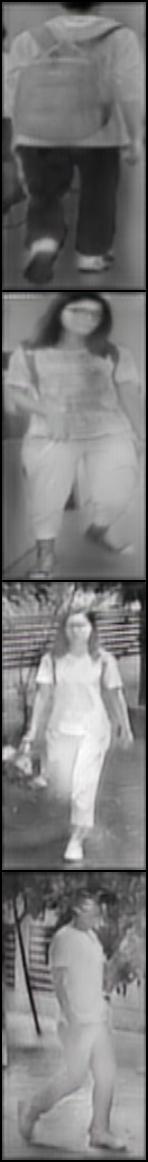}}
}%
\subfloat[][]{
  \centering
  \frame{\includegraphics[width=1.75cm,height=13cm]{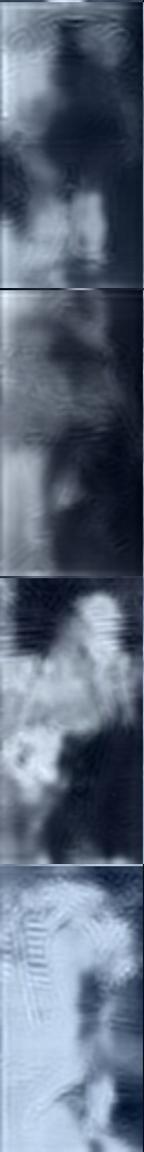}}
}%
\subfloat[][]{
  \centering
  \frame{\includegraphics[width=1.75cm,height=13cm]{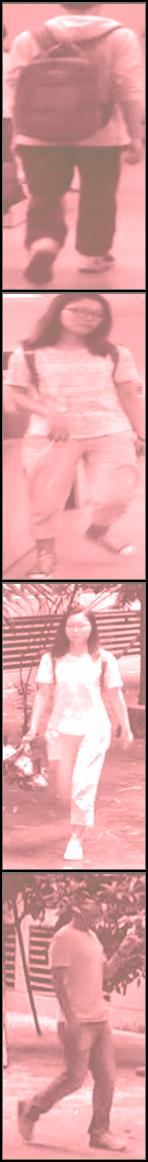}}
}%
\subfloat[][]{
  \centering
  \frame{\includegraphics[width=1.75cm,height=13cm]{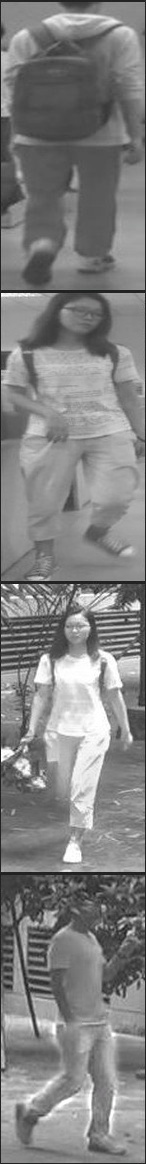}}
}%
  \caption{Examples of 4 intermediate images generated from (a) Visible by (b)CycleGAN\cite{CycleGAN2017},(c) ThermalGAN \cite{kniaz2018thermalgan}, (d) AlignGAN \cite{Wang_2019_ICCV_AlignGAN}, (e) HiCMD \cite{HI-CMD}, (f) MMN\cite{zhang2021towards} and (g) our \AG  on the SYSU-MM01 dataset. }
  \label{fig:inter_result}
\end{figure*}

\subsection{Incorporating to Extra SOTA Models}
Moreover, to compare with the requested method (PartMix \cite{PartMix23} and DEEN\cite{LLCM}), we apply our \AG approach to them and report the performance and efficacy. 
The PartMIX method \cite{PartMix23} utilizes part-based information by introducing an additional sub-module for part detection. To ensure a fair comparison, we integrated our \AG{} training approach into the SAAI method \cite{SAAI}, a publicly available part-based VI-ReID method, and retrained it from scratch. Similarly, \AG{} was added to the DEEN method \cite{LLCM} to evaluate its effectiveness on a non-part-based baseline. The results of these integrations on the SYSU-MM01 and RegDB datasets are shown in Table \ref{tab:part-Deen}.

\begin{table*}[!h]
\centering

\resizebox{0.75\linewidth}{!}{%
\begin{tblr}{
  colspec={|l|cc|cc||cc|cc||c|c|},
  cell{1}{1} = {r=3}{c},
  cell{1}{2} = {c=4}{c},
  cell{1}{6} = {c=4}{c},
  cell{1}{10} = {c=2}{c},
  cell{2}{2} = {c=2}{c},
  cell{2}{4} = {c=2}{c},
  cell{2}{6} = {c=2}{c},
  cell{2}{8} = {c=2}{c},
  cell{2}{10} = {r=2}{c},
  cell{2}{11} = {r=2}{c},
  hline{1,4,10} = {-}{},
  hline{2} = {2-11}{},
  hline{3} = {2-9}{},
}
\textbf{Method} & \textbf{SYSU-MM01} &  &  &  & \textbf{RegDB} &  &  &  & \textbf{Complexity} & \\
 & All Search &  & Indoor Search &  & V $\rightarrow$  I &  & I $\rightarrow$ V &  & \#Params  & FLOPs \\
 & \textbf{R1} & \textbf{mAP} & \textbf{R1} & \textbf{mAP} & \textbf{R1} & \textbf{mAP} & \textbf{R1} & \textbf{mAP} &  & \\
PartMix & 77.78 & 74.62 & 81.52 & 84.34 & 84.93 & 82.52 & 85.66 & 82.27 & 52.5M & 7.4G\\
\AG (ours) & 72.23 & 70.58 & 83.45 & 84.25 & 89.03 & 83.89 & 87.91 & 83.04 & 24.8M & 5.2G\\
\AG + Part & \textbf{77.35} & \textbf{78.56} & \textbf{84.60} & \textbf{88.91} & \textbf{91.22} & \textbf{91.15} & \textbf{91.3} & \textbf{92.5} & 52.5M & 7.4G\\ \hline \hline
DEEN & 74.7 & 71.8 & 80.3 & 83.3 & 91.1 & 85.8 & 89.5 & 83.4 & 89.0M & 39.8G\\
\AG(ours) & 72.23 & 70.58 & \textbf{83.45} & 84.25 & 89.03 & 83.89 & 87.91 & 83.04 & 24.8M & 5.2G\\
DEEN + \AG & \textbf{75.36} & \textbf{72.7} & 83.19 & \textbf{84.91} & \textbf{92.5} & \textbf{86.41} & \textbf{90.38} & \textbf{84.7} & 89.0M & 39.8G
\end{tblr}
}
\caption{Accuracy of integrating the proposed \AG with different baselines on the SYSU-MM01 (single-shot setting) and RegDB datasets.}
\label{tab:part-Deen}
\end{table*}

For the part-based baseline, integrating \AG{} with part information leads to a substantial improvement in performance. Specifically, the combined \AG{} + Part approach achieves a notable increase in mAP for SYSU-MM01 (All Search) from 74.62\% to 78.56\% while maintaining competitive Rank-1 accuracy. On RegDB, the combined approach further improves cross-modal performance, with a significant boost in mAP for the I$\rightarrow$V setting (from 82.52\% to 91.15\%). These results highlight the complementary nature of \AG{} when combined with part-based strategies, effectively enhancing both identity discrimination and modality bridging.

For the DEEN baseline, \AG{} integration results in consistent improvements across all settings. For example, in SYSU-MM01 (All Search), the DEEN + \AG{} combination improves mAP from 71.8\% to 72.7\%, and on RegDB (V$\rightarrow$I), mAP increases from 85.8\% to 86.41\%. These gains demonstrate the adaptability of \AG{} in improving feature representations across different baseline architectures. Notably, \AG{} achieves these enhancements with significantly lower complexity compared to other methods, as evidenced by its reduced parameter count and FLOPs.

These improvements demonstrate that \AG acts as a powerful enhancement tool when integrated with other models, further boosting their accuracy. Moreover, this integration does not introduce additional computational overhead during inference, making it a highly efficient solution.

    


\end{document}